\documentclass{article}
\usepackage{amsmath}
\usepackage{enumitem}
\usepackage{amsthm}
\theoremstyle{plain}
\newtheorem{theorem}{Theorem}[section]
\newtheorem{proposition}[theorem]{Proposition}
\newtheorem{corollary}[theorem]{Corollary}

\theoremstyle{definition}

\theoremstyle{remark}

\usepackage[main, final]{neurips_2026}

\usepackage[utf8]{inputenc} 
\usepackage[T1]{fontenc}    
\usepackage{hyperref}       
\usepackage{url}            
\usepackage{booktabs}       
\usepackage{amsfonts}       
\usepackage{nicefrac}       
\usepackage{microtype}      
\usepackage{xcolor}         
\usepackage{graphicx}
\usepackage{cleveref}
\usepackage{algorithm}
\usepackage{algpseudocode}
\usepackage{float}
\usepackage{multirow}
\usepackage{graphicx}
\usepackage{adjustbox}
\usepackage{arydshln}
\usepackage{inlinelabel}

\title{COMET: Decoupled Distillation, Routing, and Capacity Control for Task-Agnostic Continual Vision--Language Learning}

\author{Seungyoon Woo, Keonhee Park, Gunhee Kim \\
Seoul National University \\
{\small \texttt{\{seungyoon.woo, keonhee.park\}@vision.snu.ac.kr, gunhee@snu.ac.kr}} \\
}

\begin{document}

\maketitle

\begin{abstract}
Vision--language models (VLMs) such as CLIP retain zero-shot recognition ability, but their open-world use often requires task-agnostic continual learning: tasks arrive sequentially, past data are unavailable, task identity is unknown, and the backbone cannot be retrained.
We argue that this setting fails VLMs through three coupled pressures beyond stability--plasticity tradeoff: (1) modal-consistency drift, (2) representation-space interference, and (3) per-task capacity shortfall.
We propose COMET (\textbf{CO}llaborative \textbf{M}ixture-of-\textbf{E}xpert \textbf{T}ransfer) to resolve these three issues respectively as follows.
A frozen teacher provides only a feature reference on a shared image--text pool; Tri-Affinity Distillation preserves image--image, text--text, and image--text geometry; and Student-Aware Cross-Expert Distillation aligns new experts with compatible prior experts so routing errors degrade gracefully.
We design COMET to keep capacity and inference separate: a contextual bandit keeps or merges full per-task experts using post-training signals, while a reconstruction router selects sparse experts at test time without teacher semantics.
On the X-TAIL benchmark, we show that each proposed components indeed improves the continual learning performance.
COMET shows that continual vision--language adaptation can preserve multimodal geometry, coordinate accumulated experts, and allocate capacity jointly when feature reference, expert shaping, routing, and capacity control are separated by construction.
\end{abstract}

\section{Introduction}

Recent Vision--language models~\citep{liu2023visual, liu2024improved, chen2024internvl, wang2024qwen2} such as CLIP~\citep{radford2021learning}, trained on web-scale image--text pairs, can recognize objects they were never explicitly taught; this capacity is known as zero-shot transfer.
However, in open-world deployment, new tasks can arrive sequentially  (chest X-rays~\citep{samir2023covid}, then satellite imagery~\citep{helber2019eurosat}, then microscopy~\citep{iwata2022sem}), and the model is expected to be \textit{continually learned}  under three desiderata: no access to stored examples from earlier tasks, no knowledge of task identity at inference time, and no budget to retrain the backbone for each new task.
Recent task-agnostic image-classification benchmarks such as X-TAIL\citep{xu2024x-tail} pin down this scenario: during each training phase only the current task's data are available, and at inference the model picks a label from a generalized candidate space containing all classes seen so far and even unseen classes. 
Because full-backbone updates can be too expensive, the community has focused on \emph{parameter-efficient} continual learning~\citep{houlsby2019parameter, li2021prefix, park2024pre}: only small task-specific modules (adapters~\citep{yu2024moe-adapter, luo2025lada}, prompts, or expert networks) are updated, leaving the backbone fixed.
This lighter paradigm can still fail if the stream becomes long, heterogeneous, and open-ended.

We argue that this failure does not stem from only a stability--plasticity tradeoff~\citep{kirkpatrick2016forgetting, mccloskey1989catastrophic}, but rather from the simultaneous failure of three axes, each with its own characteristic signature, whereas prior methods typically address only one axis at a time.
First, \emph{modal-consistency drift} is the divergence between a model's pre-training and downstream alignments as adaptation accumulates.
A CLIP-style model~\citep{radford2021learning} carries two alignments: one from pre-training (behind zero-shot transfer), the other from downstream adaptation accumulated across the stream.
Protecting one could silently damage the other; for example, a model well adapted to microscopy may begin to misrank everyday objects it had previously recognized.
Second, \emph{representation-space interference} is the incompatibility among independently-trained experts under imperfect routing.
Since each is trained in isolation, a routing mistakenly sends an ambiguous input into a module that has never been shaped to handle it, and errors concentrate near expert boundaries.
Third, \emph{per-task capacity shortfall} is the inability of a small module to represent feature geometries far from pre-training.
A small adapter lacks the representational power for the feature geometry of medical imaging, microscopy, and remote sensing.

Existing method families each make this three-way trade explicit. Weight-interpolation methods (e.g., WiSE-FT~\citep{wortsman2022robust-ft}, ZSCL~\citep{zheng2023zscl}) mitigate drift but starve distant domains of capacity, because no convex combination of pre-trained and ask-adapted weights can introduce new feature directions. Mixture-of-experts~\citep{shazeer2017outrageously} with lightweight adapters (e.g., MoE-Adapters~\citep{yu2024moe-adapter}, LADA~\citep{luo2025lada}) add capacity but never coordinate the experts they accumulate, because each adapter is trained in isolation, leaving no shared geometry~\cite{pascanu2014saddle_point} for the router to degrade into. Full fine-tuning can recover capacity but lose both alignments, because rewriting the backbone may erase the pre-training geometry itself.
Each axis is relieved at the expense of the other two.

We show that the three pressures can be resolved jointly, but only if their mechanisms are kept strictly disjoint.
Entangling them is why prior methods~\citep{wortsman2022robust-ft, zheng2023zscl, yu2024snd, xu2024x-tail} trade one failure for another.
When the teacher model is also used as the router, its inherent semantic biases can distort capacity allocation: tasks appear similar from its perspective due to pre-training, even if their downstream representation geometries differ substantially.
Similarly, a merging rule~\citep{frankle2020linear_conncetivity, neyshabur2020transferred_transfer} that reuses the adaptation loss conflates compatibility with plasticity; it merges two experts whenever they fit the current task equally well, ignoring possible long-term overlap.
An adapter that simultaneously absorbs new tasks and corrects past ones eventually fails at both, as its capacity is divided between forward learning and backward retention, with each undermining the other.
Across these methods, a common pattern emerges: multiple roles are coupled into a single mechanism, leading to systemic collapse.

COMET (\textbf{CO}llaborative \textbf{M}ixture-of-\textbf{E}xpert \textbf{T}ransfer)which enforces this separation by assigning each role to a dedicated component.
A frozen vision--language teacher serves only as a task-invariant feature reference: its affinity structure on a shared pool of image--text pairs anchors distillation, and its frozen image encoder fixes the feature space in which routers operate.
Neither the router nor the merging criterion uses its predictions.
A lightweight contextual-bandit policy~\citep{beygelzimer2011contextual_bandit} serves only as the capacity controller, deciding at each task boundary whether to allocate a new expert or consolidate into an existing one.
Using a bandit~\citep{lattimore2020bandit} (rather than a fixed threshold) allows allocation to track the novelty of the incoming stream.
It reads only post-training signals that never enter the adaptation loss, such as accuracy on a held-out subset of the reference pool and weight-space similarity to the nearest prior.
Its reward is defined as the post-action accuracy on a held-out reference pool. 
Distillation~\citep{hinton2015distilling, hinton2015kd} shapes only the experts.
Since allocation does not inherit teacher semantics and compatibility corrections do not entangle with capacity control, the retained expert count grows sublinearly in the task count whenever cumulative novelty is sublinear or stays within a bounded region of the representation space.
Each new task's feature geometry tends to lie close to the span of at least one prior expert on the shared reference pool, so consolidation aligns with novelty. 
This is an empirical regularity rather than a worst-case guarantee: a stream that keeps entering new regions can still force linear growth, even when each per-task shift is bounded.

Three mechanisms realize this decomposition, one per failure axis, each breaking one coupling.
\textbf{Tri-Affinity Distillation (TAD)} addresses modal-consistency drift by matching image--image, text--text, and image--text similarity structures between student and frozen teacher on the shared pool, anchoring the student to pre-training geometry.
\textbf{Student-Aware Cross-Expert Distillation (SACED)} addresses routing-induced interference by aligning each new expert with compatible prior experts on the same pool, so a misrouted sample lands in a still-coherent module, not foreign ones.
Since the router is imperfect, we design for graceful degradation rather than assuming perfect routing.
\textbf{Full per-task experts with bandit-controlled merging} address per-task capacity shortfall: COMET trains a full compact expert per task instead of a shared-backbone adapter~\citep{latif2024shared_adapter}, and the bandit merges when consolidation is justified.
The merge operates on already-frozen candidates at task boundaries, avoiding the optimization-time interpolation of weight-ensembling.

Three lines of evidence from X-TAIL experiments validate this decomposition. 
Axis-wise ablations confirm individual necessity with each mechanism's removal restores only its own failure mode: drift, interference, or shortfall:
\begin{itemize}
    \item \textbf{Modal geometry is preserved.} TAD improves Transfer accuracy by 4.7\%p over the no-distillation variant, and a 2.6$\times$ smaller student retains the teacher's zero-shot ability while improving downstream accuracy.
    \item \textbf{Routing failures are absorbed, not amplified.} SACED raises accuracy on misrouted samples by 10.8\%p without exemplar replay, and 52.5\% of misrouted samples are still classified correctly.
    \item \textbf{Capacity matches distant domains at sublinear storage cost.} Full experts under bandit-controlled merging improve Transfer by 18.7\%p over the strongest baseline on a four-dataset benchmark spanning medical, microscopy, and histopathology domains. The pool retains roughly 5-8 experts after 10 X-TAIL tasks, illustrating sublinear growth in practice.
\end{itemize}

These results reframe continual vision--language adaptation: not as a single trade-off to balance, but as three problems to solve jointly: preserving multimodal geometry, coordinating accumulated modules, and allocating sufficient capacity.  
To the best of our knowledge, this is the first work that establishes training-time role disjointness, across the feature reference, the capacity controller, and the per-expert shaping objective, as the central design principle of a parameter-efficient continual VLM framework.
Disjointness is not emergent, it must be enforced by construction.

\section{Related Work}
\paragraph{Continual Learning Settings.}
Conventional continual learning~\citep{hadsell2020embracing, de2021continual, li2018lwf, rebuffi2017icarl} is typically studied under Task-Incremental Learning (TIL)~\citep{li2018lwf} and Class-Incremental Learning (CIL)~\citep{rebuffi2017icarl} settings, where models are sequentially trained on tasks and evaluated with or without access to task identities at inference, respectively. 
To relax the assumption that all tasks originate from a single domain, the Multi-domain Task-Incremental Learning (MTIL)~\citep{zheng2023zscl} considers tasks drawn from diverse domains with domain-specific class distributions. 
More recently, Cross-domain Task-Agnostic Incremental Learning (X-TAIL)~\citep{xu2024x-tail} further removes task identity information and requires models to perform inference across both seen and unseen domains. 
The X-TAIL is a particularly challenging benchmark in VLM-based continual learning, as it demands not only the retention of previously learned knowledge but also the preservation of the zero-shot ability.

\paragraph{CLIP-based Continual Learning Methods.}
Recent continual learning studies~\citep{wortsman2022robust-ft,zheng2023zscl,yu2024moe-adapter,xu2024x-tail,luo2025lada} have widely adopted CLIP~\citep{radford2021learning}, leveraging its strong representations and zero-shot ability.
Conventional approaches fall into two categories: fine-tuning and Parameter-Efficient Fine-Tuning (PEFT)~\citep{houlsby2019parameter,hu2022lora,li2021prefix}.
Fine-tuning methods~\citep{wortsman2022robust-ft,zheng2023zscl} directly update the CLIP.
WISE-FT~\citep{wortsman2022robust-ft} fine-tunes CLIP and interpolates the updated weights with the pre-trained weights to balance adaptation and zero-shot transfer.
ZSCL~\citep{zheng2023zscl} further stabilizes the interpolation via knowledge distillation from both the pre-trained model and a reference dataset, reducing unnecessary shift and preserving zero-shot capability.
PEFT methods~\citep{yu2024moe-adapter,xu2024x-tail,luo2025lada}, by contrast, keep the backbone frozen and introduce lightweight modules.
MoE-Adapters~\citep{yu2024moe-adapter} augment CLIP with Mixture-of-Experts adapters and routers, while retaining the CLIP for zero-shot ability on unseen tasks.
RAIL~\citep{xu2024x-tail} employs a ridge regression--based adapter paired with a training-free fusion module to preserve zero-shot ability.
LADA~\citep{luo2025lada} appends class-wise memory vectors to the image encoder via label-specific adapters, enabling efficient adaptation without auxiliary parameter selection at inference.
Unlike existing methods that fine-tune a CLIP or attach lightweight modules to the pre-trained CLIP, our approach treats each CLIP as a task-specific expert and extends the MoE~\citep{shazeer2017outrageously} to preserve sufficient plasticity.

\begin{figure}[t]
    \centering
    \includegraphics[width=0.98\linewidth]{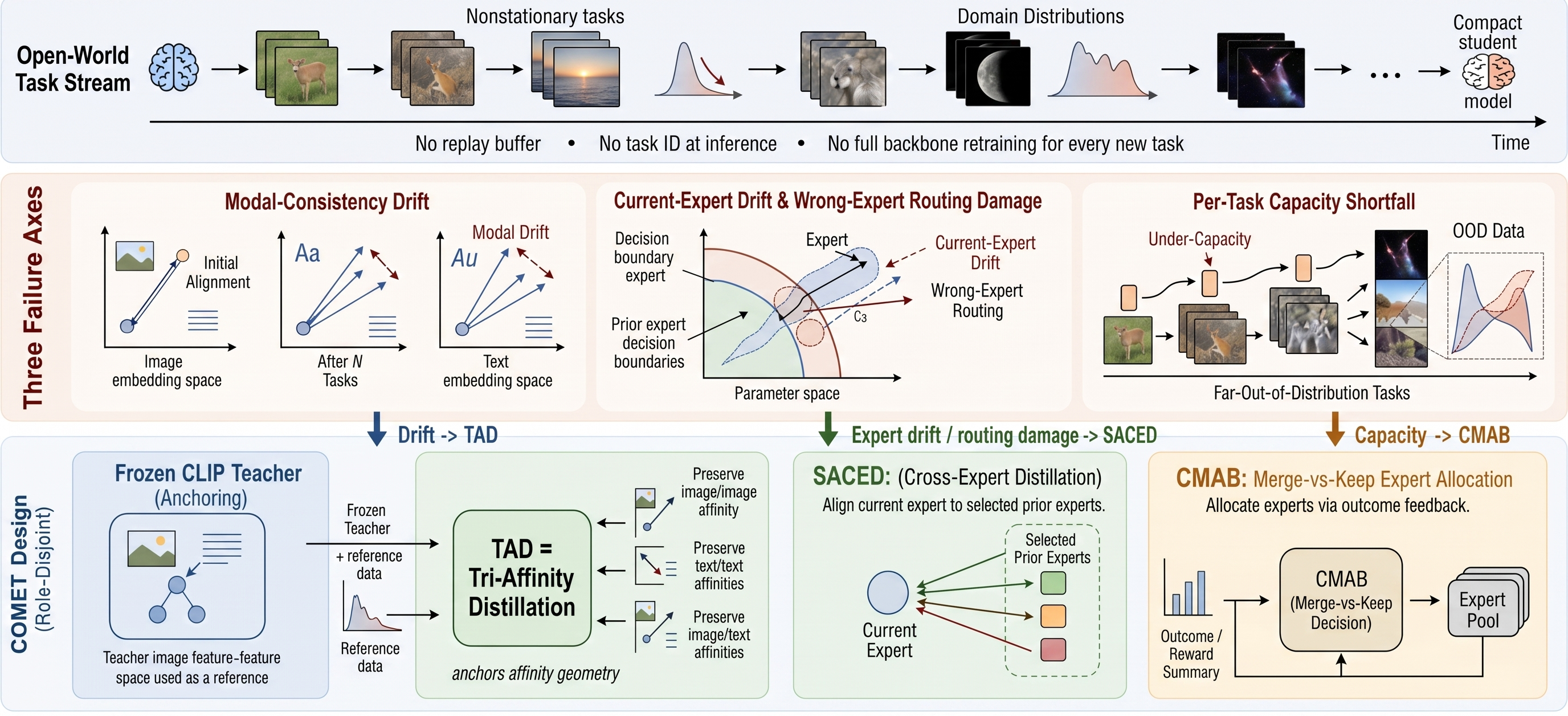}
    \vspace{-2mm}
    \caption{\textbf{Overview of COMET.}
    Targeting task-agnostic continual vision--language learning without replay or full-backbone retraining.
    It decomposes three failure modes: modal drift, routing-induced expert interference, and capacity shortfall into three role-disjoint remedies: TAD for CLIP-geometry preservation, SACED for cross-expert compatibility, and CMAB for adaptive expert allocation.}
    \label{fig:main}
    \vspace{-2mm}
\end{figure}

\section{Preliminary}

\textbf{The Continual Learning Setting.}
We consider the X-TAIL~\citep{xu2024x-tail}, where a model sequentially learns tasks 
$\mathcal{T} = \{\mathcal{T}_1, \mathcal{T}_2, \dots, \mathcal{T}_K\}$.
Each task $\mathcal{T}_k$ has a labeled dataset 
$\mathcal{D}_k = \{(x_i^{(k)}, c_i^{(k)})\}_{i=1}^{N_k}$ 
with class labels $c_i^{(k)} \in \mathcal{C}^{(k)}$.
At step $k$, the model trains only on $\mathcal{D}_k$ without access to previous task data.
Let $\mathcal{C}_L^{(k)} = \bigcup_{i=1}^{k} \mathcal{C}^{(i)}$ denote the set of seen class labels after learning task $k$, and let $\mathcal{C}_U$ denote unseen class labels that are not observed during supervised continual training.
Following the X-TAIL evaluation protocol, inference is task-agnostic and is performed over the generalized label space $\mathcal{C}_N^{(k)} = \mathcal{C}_L^{(k)} \cup \mathcal{C}_U$.
At test time, the task identity is unknown and the model selects a label from both previously seen and unseen classes.

\section{Approach: COMET}
\Cref{fig:main} maps each failure axis to a disjoint mechanism.
Given a task stream, COMET incrementally builds a pool of task-specific CLIP experts and isolates each role by construction so that geometry preservation, expert compatibility, and capacity control never share an objective.
Accepted experts receive no further gradient updates; they change only when the capacity controller merges a later expert into them.
COMET uses a frozen vision--language teacher only as a feature reference: the teacher anchors distillation on a shared image--text reference pool $\mathcal{D}_{\mathrm{ref}}$, and its frozen image encoder supplies the feature space in which routers operate.
Neither the router nor the merge rule uses its predictions.
A contextual bandit is used only as the capacity controller, deciding at task boundaries whether the newly trained expert should remain separate or be merged with a compatible prior expert (\S\ref{sec:cmab}).
The expert itself is shaped by two distillation objectives: Tri-Affinity Distillation (TAD; \S\ref{sec:tri-affinity}) preserves pre-trained multimodal geometry, and Student-Aware Cross-Expert Distillation (SACED; \S\ref{sec:saced}) aligns the new expert with compatible prior experts so routing errors degrade gracefully.
At inference, a lightweight reconstruction-based router selects sparse top-$k$ experts for each input without consulting teacher semantics (\S\ref{sec:routing}).
The remainder of this section follows the data path: anchor the new expert to multimodal geometry, coordinate it with compatible prior experts, decide whether the pool should grow, and route test inputs by reconstruction fit.

Each task follows the same execution order.
COMET initializes a new expert from the shared CLIP initialization, trains it on the current task data with task supervision plus the two distillation terms, evaluates post-training compatibility signals on $\mathcal{D}_{\mathrm{ref}}$, and only then invokes the bandit to keep or merge the expert.
The per-task objective is
$\mathcal{L}_{k} = \mathcal{L}_{\mathrm{task}} + \mathcal{L}_{\mathrm{TAD}} + \lambda_2\,\mathcal{L}_{\mathrm{SACED}}$,
where $\mathcal{L}_{\mathrm{task}}$ is the supervised loss on $\mathcal{D}_k$, the two distillation losses use $\mathcal{D}_{\mathrm{ref}}$, and $\lambda_2$ balances SACED against the geometry-anchoring TAD term (within TAD itself, $\lambda_1$ controls the cross-modal channel.
When no compatible prior expert exists, the SACED expert set is empty and the SACED term is zero.
Only after this optimization step does the capacity controller act.
This ordering prevents the teacher, the router, and the merge policy from becoming hidden parts of the adaptation objective.

\subsection{Tri-Affinity Distillation (TAD)}
\label{sec:tri-affinity}

TAD targets modal-consistency drift by preserving the relation structure of the pre-trained CLIP space~\citep{radford2021clip}.
Sequential adaptation on non-i.i.d.\ tasks can move image features, text features, and their cross-modal alignment in different directions, eroding both transferability and zero-shot generalization~\citep{wortsman2022robust-ft,zheng2023zscl}.
Because task labels vary across the stream, logit distillation gives an unstable anchor; TAD instead uses the frozen CLIP teacher only as a label-agnostic feature reference on $\mathcal{D}_{\mathrm{ref}}$ and matches the three CLIP affinity channels---image--image, text--text, and image--text---between the student expert and the teacher.
These three constraints preserve the geometry needed for transfer while leaving enough freedom for task-specific adaptation.

\textbf{Intra-Modality Distillation.}
We apply intra-modality distillation~\citep{hinton2015distilling} separately to image--image and text--text relations so the student keeps the teacher's within-modality geometry.
For a reference mini-batch of $B$ image--text pairs from $\mathcal{D}_{\mathrm{ref}}$, let $S_{\mathrm{I}}, T_{\mathrm{I}} \in \mathbb{R}^{B\times d}$ denote the $\ell_2$-normalized image embeddings produced by the student and teacher, respectively, and define $S_{\mathrm{T}}, T_{\mathrm{T}} \in \mathbb{R}^{B\times d}$ analogously for text.
The image-affinity loss minimizes the discrepancy between the student and teacher pairwise-similarity matrices, $\mathcal{L}_{\mathrm{I}}=
\left\|
S_{\mathrm{I}}S_{\mathrm{I}}^{\top}
-
T_{\mathrm{I}}T_{\mathrm{I}}^{\top}
\right\|_F^2$,
and the text-affinity loss $\mathcal{L}_{\mathrm{T}}$ takes the same form on $(S_{\mathrm{T}},T_{\mathrm{T}})$.
We do not introduce an additional affinity temperature, since a scalar divisor inside this squared loss would only rescale the objective and is therefore equivalent to adjusting the relative loss weight.
Matching these within-modality structures inherits the pre-trained CLIP's modality-specific geometry and reduces representation drift during adaptation.

\textbf{Inter-Modality Distillation.}
To maintain the shared vision--language embedding space, we align image-to-text and text-to-image similarity distributions between the student expert and the frozen pre-trained CLIP teacher.
For each modality pair $(a,b) \in \{(\mathrm{I},\mathrm{T}),(\mathrm{T},\mathrm{I})\}$ and model $m \in \{S,T\}$, we compute scaled cross-modal logits $Z^{m}_{a\rightarrow b}=\alpha\, E^{m}_{a}(E^{m}_{b})^{\top},\;m \in \{S,T\}$,
where $\alpha$ is the logit scaling factor, and
$E^{S}_{\mathrm{I}}=S_{\mathrm{I}}$, $E^{S}_{\mathrm{T}}=S_{\mathrm{T}}$,
$E^{T}_{\mathrm{I}}=T_{\mathrm{I}}$, and $E^{T}_{\mathrm{T}}=T_{\mathrm{T}}$.
Converting each row to a probability distribution with temperature $\tau_c$, the inter-modality loss uses a teacher-to-student forward KL divergence:
\begin{equation}
\small
\begin{aligned}
\mathcal{L}_{\mathrm{Cross}}
&=
\frac{\tau_c^{2}}{2B}
\sum_{(a,b)\in\{(\mathrm{I},\mathrm{T}),(\mathrm{T},\mathrm{I})\}}
\sum_{i=1}^{B}
\sum_{\ell=1}^{B}
P^{T}_{a\rightarrow b,\,i\ell}
\log
\frac{
P^{T}_{a\rightarrow b,\,i\ell}
}{
P^{S}_{a\rightarrow b,\,i\ell}
},
\end{aligned}
\label{eq:cross_modal_loss}
\end{equation}
where
$P^{m}_{a\rightarrow b,\,i\cdot}
=
\mathrm{softmax}(Z^{m}_{a\rightarrow b,\,i\cdot}/\tau_c)$,
$\operatorname{sg}(\cdot)$ denotes stop-gradient, and $\ell$ ranges over the within-batch candidates.
The outer summation averages the two cross-modal directions; the KL divergence itself remains asymmetric, with the frozen teacher distribution serving as the target.

The final loss of TAD is $\mathcal{L}_{\mathrm{TAD}}=\mathcal{L}_{\mathrm{I}}+\mathcal{L}_{\mathrm{T}}+\lambda_1\mathcal{L}_{\mathrm{Cross}}$,
where the two intra-modality losses and the bidirectional cross-modal distillation loss keep all three affinity channels tied to the same teacher geometry. Consequently, TAD constrains drift without forcing the student to imitate task-specific teacher logits.

\subsection{Student-Aware Cross-Expert Distillation (SACED)}
\label{sec:saced}

SACED targets representation-space incompatibility among independently trained experts under imperfect routing.
Freezing prior experts prevents direct overwriting, but a new expert optimized only for the current task can still become geometrically incompatible with its neighbors, so a misrouted input may land in a module with no shared representational frame.
SACED therefore identifies compatible prior experts on a shared reference pool $\mathcal{D}_{\mathrm{ref}}$~\citep{zheng2023zscl,yu2024snd} and aligns the new expert with their attention-level reasoning through a collaborative distillation loss.
The frozen teacher is used only as a common reference anchor~\citep{li2018lwf,hinton2015kd,Dzagoruyko2017attn_attn}.
The goal is graceful failure: even when the router selects a nearby but non-specialized expert, the activated module remains close to the representation geometry of compatible experts.

\textbf{Expert Candidate Selection.}
SACED evaluates previously learned experts on the shared reference pool $\mathcal{D}_{\mathrm{ref}}$, which may be out-of-distribution with respect to the current task, without replaying old task data.
A prior expert is rated by a weighted compatibility score combining three signals: (i) co-activation overlap on the top-$K_g$ gating assignments~\citep{shazeer2017smoe} with weight $0.5$, (ii) top-$1$ class-prediction agreement~\citep{stanton2021kdwork} with weight $0.3$, and (iii) cosine similarity of class-probability distributions~\citep{chen2025mergingsmoe} with weight $0.2$.
The top-$k_{\mathrm{ref}}$ scoring experts above a threshold form $\mathcal{S}_k$, the selected expert set for task $k$.
This keeps cross-expert transfer local to experts likely to share useful representation geometry. If no prior expert passes the threshold, $\mathcal{S}_k=\emptyset$ and the $\mathcal{L}_{SACED}$ is set to zero for the current task.

\textbf{Teacher-Anchored Attention Alignment.}
SACED aligns the current expert with selected prior experts using attention-based supervision~\citep{zagoruyko2017attention_transfer}, since attention maps provide a label-agnostic signal of representation use~\citep{jia2021align,li2021align_fuse}.
To make attention comparisons well-defined, the student expert, the selected prior expert, and the teacher are all evaluated on the same reference input $x_{\mathrm{ref}}\in\mathcal{D}_{\mathrm{ref}}$.
This avoids comparing attention maps produced from different images and uses the teacher as a shared reference frame.
Given two attention maps $A$ and $B$, SACED first applies a fixed pooling operation after softmax normalization and then computes a cosine distance:
\begin{equation}
\small
\begin{aligned}
d_{\mathrm{attn}}(A,B)
=
1 -
\cos\!\left(\rho(A),\rho(B)\right), \ \ \text{ where } \rho(A)
=
\operatorname{pool}\!\left(\operatorname{softmax}(A)\right), 
\end{aligned}
\label{eq:saced_attn_distance}
\end{equation}
where $\operatorname{pool}(\cdot)$ denotes the same pooling operation over attention dimensions for all experts, such as averaging over heads, layers, or tokens.
The SACED loss then matches the student--teacher attention distance to the selected-prior--teacher attention distance on the same reference input:
\begin{equation}
\small
\begin{aligned}
\mathcal{L}_{\mathrm{SACED}}
=
\mathbb{E}_{\substack{
x_{\mathrm{ref}}\sim\mathcal{D}_{\mathrm{ref}} \\
j\sim\mathrm{Unif}(\mathcal{S}_k)
}}
\left[
\left|
d_{\mathrm{attn}}\!\left(
A_s(x_{\mathrm{ref}}),
A_t(x_{\mathrm{ref}})
\right)
-
\operatorname{sg}\!\left(
d_{\mathrm{attn}}\!\left(
A_j(x_{\mathrm{ref}}),
A_t(x_{\mathrm{ref}})
\right)
\right)
\right|
\right],
\end{aligned}
\label{eq:saced_loss}
\end{equation}
where $A_s$, $A_j$, and $A_t$ denote the attention maps of the student expert, the $j$-th selected prior expert, and the frozen teacher, respectively.
The operator $\operatorname{sg}(\cdot)$ denotes stop-gradient, since the selected prior experts and the teacher are fixed during the current task.
If $\mathcal{S}_k=\emptyset$, we define $\mathcal{L}_{\mathrm{SACED}}=0$.

This objective transfers teacher-anchored attention geometry rather than class logits.
The student is therefore encouraged to occupy a representation frame compatible with selected prior experts while remaining free to specialize its output predictions for the current task.
As a result, SACED reduces the damage caused by inevitable routing mistakes without directly modifying the router.

\subsection{Sub-Linear Expert Growth}
\label{sec:theory}

COMET's expert pool is not intended to grow once every task; its size is controlled by how often the stream presents distributions that are genuinely novel in a fixed teacher-induced representation.
Let $\delta_k=\min_{m\le M_{k-1}}d(\psi(\mathcal{D}_k),\psi(E_m))$ be the distance from task $k$ to the current expert pool, where $d$ is a metric in the frozen-teacher feature space and $\delta_1=+\infty$ because the pool starts empty, and call task $k$ $\epsilon$-novel when $\delta_k>\epsilon$.
For any merge-or-keep strategy that adds at most one expert per task and keeps a new expert only on $\epsilon$-novel tasks, the expert count satisfies
\begin{equation}
\label{eq:novelty_growth_bound}
\mathbb{E}[M_K]
\le
\sum_{k=1}^{K}\Pr(\delta_k>\epsilon).
\end{equation}
Hence, if the cumulative novelty is sub-linear, $\sum_{k=1}^{K}\Pr(\delta_k>\epsilon)=o(K)$, then $\mathbb{E}[M_K]=o(K)$.
This is a sufficient-condition guarantee rather than a worst-case claim: it covers streams with revisits, finite-cover support, or sublinear cumulative drift, while excluding adversarial streams that continually move into new regions.
Appendix~\ref{app:growth_proof} gives the full proof and extends the bound to practical policies with occasional false expansions, including the margin condition needed to translate bandit regret into a sub-linear false-expansion rate.
We use the bound to state when expert growth stays sub-linear and rely on experiments to check CMAB against it.

\subsection{Expert Merging via Contextual Multi-Armed Bandit}
\label{sec:cmab}
\label{sec:bandit}

TAD and SACED make experts compatible, but retaining every new expert still increases storage with the stream.
COMET therefore uses a contextual multi-armed bandit (CMAB) as a post-training capacity controller.
After task $k$, the new expert $E_{\mathrm{new}}$ is either kept or merged into a previous expert, i.e.,
$a_k\in\{\textsc{keep}(E_{\mathrm{new}})\}\cup\{\textsc{merge}(E_{\mathrm{new}},E_c):E_c\in\mathcal{E}_{<k}\}$.
Because this decision is made after expert optimization, capacity control is decoupled from the representation losses.

COMET first forms a small candidate set $\mathcal{C}_k$ using lightweight compatibility signals such as centroid similarity, routing overlap, and validation-based cross-performance.
Classification accuracy and rewards are computed on the labeled held-out split $\mathcal{V}_k$ of the current task; the unlabeled reference pool $\mathcal{D}_{\mathrm{ref}}$ is used only for label-free feature, routing, and distribution-agreement signals.
For candidate $E_c$, the context is $\phi_{k,c}=[\mathrm{Acc}_k,m_{k,c},-d_{k,c},\kappa_k,-\ell_c]^\top$, where $m_{k,c}$ measures merge compatibility, $d_{k,c}$ estimates degradation risk, $\kappa_k$ is capacity pressure, and $\ell_c$ penalizes repeatedly merging into an overloaded expert.
The bandit score is
\begin{equation}
B_k(c)
=
w^\top \phi_{k,c}
+
\frac{\beta_k}{1+N_k},
\label{eq:cmab_score}
\end{equation}
with learned weights $w$, decaying exploration bonus $\beta_k$, and update count $N_k$.
Let $c^\star=\arg\max_{c\in\mathcal{C}_k}B_k(c)$.
COMET merges into $E_{c^\star}$ when $B_k(c^\star)>\tau_k$ and otherwise keeps the new expert.
Thus, merging is an outcome-adaptive decision over compatibility, risk, capacity pressure, and load, rather than a fixed similarity threshold.

After the selected action is evaluated, both \textsc{merge} and \textsc{keep} outcomes update the policy.
The reward is the validation accuracy of the temporary merged expert minus a small merge cost for \textsc{merge}, and the validation accuracy of $E_{\mathrm{new}}$ for \textsc{keep}.
To avoid monotone drift from positive-valued accuracy rewards, CMAB uses a centered advantage $A_k=r_k-\bar r_k$ and an action sign $y_k=+1$ for \textsc{merge}, $y_k=-1$ for \textsc{keep}; the update follows the form $w\leftarrow\Pi(w+\eta_w A_k y_k\phi_{k,c^\star})$ with EMA smoothing.
The threshold $\tau_k$ is adapted by the same advantage signal, lowering after successful merges and increasing after failed merges or successful keeps.
Full candidate construction, reward shaping, threshold update, and hyperparameters are given in Appendix~\ref{app:cmab_impl}.

\subsection{Inference Routing with Reconstruction}
\label{sec:routing}

COMET routes without task identity and without teacher predictions; the frozen teacher image encoder supplies only the feature space.
For each retained expert $E_m$, we train a lightweight autoencoder router $R_m$ on that expert's task data in the frozen CLIP image-feature space.
At inference, each router produces a reconstruction error
$r_m(x)=d^{-1}\|R_m(g(x))-g(x)\|_2^2$, where $g(\cdot)$ is the frozen image feature extractor.
To make errors comparable across experts, COMET converts them into calibrated likelihood-ratio scores
\begin{equation}
\ell_m(x)
=
\log p(\log r_m(x)\mid m)
-
\log p(\log r_m(x)\mid \mathrm{bg}),
\end{equation}
where the foreground distribution is estimated from expert-specific training data and the background distribution is estimated from the shared reference pool.
The scores are converted into gates by 
$w_m(x)=\mathrm{softmax}(\ell_m(x)/\tau_g)_m$, and only the top-$k$ experts, with $k\le2$, are evaluated.
Their class-text similarity logits are combined by gate-weighted averaging.
Calibration details and the top-$k$ activation rule are given in Appendix~\ref{app:routing_impl}.
This preserves COMET's role separation: TAD and SACED shape expert geometry, CMAB controls retained capacity, and the reconstruction router selects experts at test time without semantic task cues.

\section{Experiments}

\subsection{Experimental Setup}

\textbf{Benchmarks.}
We evaluate COMET on two complementary continual-learning benchmarks.
\emph{X-TAIL}~\citep{xu2024x-tail} sequences ten image-classification datasets (Aircraft~\citep{maji2013aircraft}, Caltech101~\citep{li2004caltech101}, DTD~\citep{cimpoi2014dtd}, EuroSAT~\citep{helber2019eurosat}, Flowers~\citep{nilsback2008flowers}, Food~\citep{bossard2014food101}, MNIST~\citep{deng2012mnist}, OxfordPet~\citep{parkhi2012oxfordpets}, StanfordCars~\citep{krause2013standfordcars}, and SUN397~\citep{xiao2010sun397}). Each treated as an individual downstream task following~\citep{zheng2023zscl,luo2025lada}.
A second \emph{heterogeneous 4-task} sequence covers medical imaging (Covid19~\citep{samir2023covid}), microscopy (SEM~\citep{iwata2022sem}), and histopathology (Kather-Texture~\citep{kather2016kather_texture}, NCT100K~\citep{kather2018nct100k}); these domains lie far from CLIP pre-training and probe whether COMET continues to adapt under large distribution shift while retaining prior knowledge.
We report full-shot results in the main text; 16-shot results are in Appendix~\ref{app:main_16}.

\textbf{Evaluation Metrics.}
We use standard continual learning metrics. \emph{Last} is the final accuracy after the full stream and reflects end-of-stream retention. \emph{Transfer} averages accuracy on unseen tasks across the stream and captures forward transfer. \emph{Average} is the mean accuracy across all evaluation steps.

\textbf{Implementation Details.}
The teacher is the frozen CLIP ViT-B/16~\citep{radford2021clip}, and each per-task expert is a TinyCLIP~\citep{wu2023tinyclip} with $\sim 58$M parameters.
For each task, we train one expert for $1{,}000$ iterations with AdamW~\citep{loshchilov2019decoupled} at a learning rate of $10^{-5}$ and a batch size of $64$.
SACED draws its reference distribution from a fixed pool of $10{,}800$ image--text pairs from the Conceptual Captions validation set~\citep{sharma2018conceptual}, which also supplies the contextual-bandit reward.
Default loss weights are $\lambda_1{=}0.2$ for the cross-modal TAD term and $\lambda_2{=}1.0$ for SACED; at inference, the autoencoder router activates the top-$k$ experts with $k{\le}2$.
The full training schedule, CMAB hyperparameters, and routing rule are specified in Appendix~\ref{app:impl}.

\subsection{Main Results}

\paragraph{X-TAIL.}
We compare COMET against fine-tuning methods (LwF, WiSE-FT, ZSCL~\citep{zheng2023zscl}) and parameter-efficient methods (MoE-Adapters, LADA~\citep{luo2025lada}); per-task and aggregate metrics are reported in \Cref{tab:main_results}.
COMET achieves the best aggregate Transfer ($62.6$), Average ($76.1$), and Last ($88.1$), improving over LADA by $1.5$, $1.4$, and $1.4$ percentage points (pp), respectively.
The largest Last improvement over LADA occurs on Aircraft ($+19.5$\,pp; $72.4$ vs.\ $52.9$).
These gains are obtained with a $2.6{\times}$ smaller per-expert backbone (TinyCLIP $58$M vs.\ CLIP-B/16 $150$M).

\begin{table}[t]
\caption{Comparison of different continual learning methods on X-TAIL full-shot for each task in terms of \textit{Transfer}, \textit{Average}, and \textit{Last} scores (\%). The best results are highlighted with \textbf{bold} style.}
\label{tab:main_results}
\centering
\resizebox{\textwidth}{!}{%
\begin{tabular}{lccccccccccc}
\toprule
\textbf{Method} & \rotatebox{45}{Aircraft} & \rotatebox{45}{Caltech101} & \rotatebox{45}{DTD} & \rotatebox{45}{EuroSAT} & \rotatebox{45}{Flowers} & \rotatebox{45}{Food} & \rotatebox{45}{MNIST} & \rotatebox{45}{Pets} & \rotatebox{45}{Cars} & \rotatebox{45}{Sun397} & Average \\
\midrule
\multicolumn{12}{l}{\texttt{TinyCLIP ViT-39M/16@224px}} \\
Zero-shot & 17.0 & 75.6 & 39.0 & 40.6 & 65.0 & 77.4 & 22.0 & 78.9 & 47.8 & 67.2 & 53.1 \\
\multicolumn{12}{l}{\texttt{CLIP ViT-B/16@224px}} \\
Zero-shot & 23.8 & 74.3 & 36.4 & 37.4 & 64.1 & 83.4 & 43.9 & 87.8 & 65.5 & 60.8 & 57.7 \\
\midrule
\textbf{Transfer} \\
LwF~\citep{li2018lwf} & -- & 64.8 & 29.1 & 7.6 & 49.7 & 72.0 & 33.3 & 64.1 & 29.8 & 53.7 & 44.9 \\
WiSE-FT~\citep{wortsman2022robust-ft} & -- & 68.3 & 30.8 & 17.4 & 59.2 & 82.5 & 41.8 & 74.9 & 48.3 & 59.7 & 53.7 \\
ZSCL~\citep{zheng2023zscl} & -- & 71.7 & 32.6 & 15.2 & 63.0 & 85.0 & 29.6 & 83.8 & 59.3 & 63.2 & 55.9 \\
MoE-Adapters~\citep{yu2024moe-adapter} & -- & 76.2 & 30.9 & 19.2 & 67.7 & \textbf{87.8} & \textbf{50.2} & \textbf{88.4} & 63.0 & 59.8 & 60.3 \\
LADA~\citep{luo2025lada} & -- & 74.9 & 35.5 & 36.5 & 66.1 & 83.9 & 40.3 & 88.0 & \textbf{65.3} & 59.6 & 61.1 \\ \hdashline
COMET (Ours) & --   & \textbf{89.3} & \textbf{47.1} & \textbf{49.9} & \textbf{67.8} & 77.5 & 32.9 & 79.1 & 51.3 & \textbf{68.7} & \textbf{62.6} \\
\midrule
\textbf{Average} \\
LwF & 18.3 & 80.5 & 42.3 & 35.1 & 49.1 & 80.9 & 57.5 & 71.1 & 37.0 & 56.3 & 52.8 \\
WiSE-FT & 28.1 & 80.0 & 43.8 & 38.9 & 62.9 & 86.6 & 61.9 & 79.5 & 54.9 & 61.6 & 59.8 \\
ZSCL & 40.3 & 78.2 & 50.0 & 40.6 & 73.2 & \textbf{87.6} & 56.2 & 86.3 & 64.5 & 64.8 & 64.2 \\
MoE-Adapters & 50.8 & 78.7 & 33.8 & 47.0 & 69.0 & 87.5 & 56.7 & 88.3 & 62.2 & 60.2 & 63.4 \\
Primal-RAIL~\citep{xu2024x-tail} & 48.7 & 89.7 & 58.7 & 74.2 & 83.8 & 86.4 & \textbf{67.3} & 88.7 & 69.3 & 64.9 & 73.2 \\
LADA & 52.8 & 93.7 & 66.8 & 77.7 & 85.1 & 86.3 & 63.4 & \textbf{89.8} & \textbf{69.6} & 61.7 & 74.7 \\ \hdashline
COMET (Ours) & \textbf{72.4} & \textbf{95.1} & \textbf{71.9} & \textbf{83.8} & \textbf{86.3} & 80.9 & 59.1 & 83.1 & 58.6 & \textbf{69.9} & \textbf{76.1} \\
\midrule
\textbf{Last} \\
LwF & 9.8 & 75.2 & 48.4 & 24.7 & 50.9 & 89.4 & 90.4 & 86.8 & 59.5 & 79.3 & 61.4 \\
WiSE-FT & 31.0 & 78.3 & 47.5 & 33.0 & 71.0 & 90.5 & 88.4 & 88.1 & 78.9 & 78.8 & 68.5 \\
ZSCL & 35.4 & 79.2 & 52.9 & 34.9 & 80.0 & \textbf{90.6} & 95.4 & 91.9 & 84.8 & 78.6 & 72.4 \\
MoE-Adapters & 50.8 & 79.0 & 34.6 & 58.9 & 69.9 & 87.1 & 66.4 & 88.1 & 59.1 & 64.1 & 65.8 \\
Primal-RAIL & 45.6 & 94.2 & 70.9 & 94.2 & 96.6 & 89.0 & 98.1 & 93.5 & 82.0 & 76.1 & 84.0 \\
LADA & 52.9 & \textbf{96.4} & 76.0 & 95.2 & 97.9 & 88.8 & 98.0 & \textbf{94.2} & 86.6 & \textbf{80.7} & 86.7 \\ \hdashline
COMET (Ours) & \textbf{72.4} & 91.5 & \textbf{77.7} & \textbf{98.1} & \textbf{98.6} & 84.3 & \textbf{98.6} & 92.2 & \textbf{87.7} & 80.2 & \textbf{88.1} \\
\bottomrule
\end{tabular}%
}
\end{table}

\paragraph{Heterogeneous 4-task sequence.}
\Cref{tab:heterogeneous_results} reports the same metrics on the medical, microscopy, and histopathology sequence, where pre-trained CLIP priors no longer dominate.
COMET improves Transfer by $+18.7$\,pp over ZSCL ($31.2$ vs.\ $12.5$), Average by $+20.2$\,pp ($72.9$ vs.\ $52.7$), and Last by $+3.5$\,pp ($96.7$ vs.\ $93.2$).
On NCT100K alone, COMET reaches $40.5$ Transfer, whereas every baseline remains below $1.6$.
These gains show that full per-task experts provide the capacity that lightweight adapters on a frozen CLIP lack, while TAD anchors adaptation to a shared teacher geometry.

\subsection{Ablation Study}

We isolate the two distillation losses and the contextual-bandit merging policy on the X-TAIL 10-dataset protocol.
\Cref{tab:compact_results} (a) shows that TAD and SACED are complementary: each improves Transfer, Average, and Last over the base model, and using both gives the strongest results in both full-shot and 16-shot settings.
Sub-component decompositions of TAD and SACED, hyperparameter and reference-data sensitivity, and additional analyses are reported in Appendices~\ref{app:sensitivity}--\ref{app:saced_select}.

\begin{table}[t]
\centering
\caption{\textbf{Heterogeneous 4-dataset full-shot results.}
Transfer, Average, and Last scores (\%).
The best result in each metric and column is in \textbf{bold}.}
\label{tab:heterogeneous_results}

\begingroup
\small
\setlength{\tabcolsep}{5.5pt}
\small
\renewcommand{\arraystretch}{1.04}
\setlength{\aboverulesep}{0.45ex}
\setlength{\belowrulesep}{0.45ex}

\begin{tabular}{@{}lrrrrr@{}}
\toprule
\textbf{Method} & \textbf{Cov19} & \textbf{KT} &
\textbf{NCT} & \textbf{SEM} & \textbf{Avg.} \\
\midrule

\multicolumn{6}{@{}l}{\textit{Zero-shot}} \\
TinyCLIP-ZS & 48.6 & 5.8  & 17.0 & 22.3 & 23.4 \\
CLIP-B-ZS   & 48.9 & 16.3 & 15.6 & 24.9 & 26.4 \\
\midrule

\multicolumn{6}{@{}l}{\textit{Transfer}} \\
CLIP-FT      & -- & 10.3 & 0.0 & 21.5 & 10.6 \\
WISE-FT      & -- & 12.0 & 0.1 & 23.6 & 11.9 \\
ZSCL         & -- & 12.7 & 0.1 & 24.8 & 12.5 \\
MoE-Adapters & -- & 10.7 & 1.6 & 10.8 & 7.7 \\
LADA         & -- & 0.0  & 0.0 & 22.6 & 7.5 \\
\addlinespace[2pt]
\textbf{COMET (Ours)} & -- &
\textbf{19.8} & \textbf{40.5} &
\textbf{33.4} & \textbf{31.2} \\
\midrule

\multicolumn{6}{@{}l}{\textit{Average}} \\
CLIP-FT      & 29.4 & 30.8 & 46.7 & 40.4 & 36.9 \\
WISE-FT      & 31.1 & 38.8 & 44.3 & 41.9 & 39.0 \\
ZSCL         & 30.8 & 45.1 & 41.6 & 41.5 & 39.8 \\
MoE-Adapters & 95.1 & 75.2 & 46.5 & 32.0 & 51.2 \\
LADA         & 72.6 & 73.5 & 45.7 & 38.9 & 52.7 \\
\addlinespace[2pt]
\textbf{COMET (Ours)} &
\textbf{95.5} & \textbf{77.3} &
\textbf{69.7} & \textbf{49.0} & \textbf{72.9} \\
\midrule

\multicolumn{6}{@{}l}{\textit{Last}} \\
CLIP-FT      & 27.4 & 5.4  & 87.8 & 96.5 & 54.3 \\
WISE-FT      & 29.1 & 27.5 & 78.3 & \textbf{96.8} & 57.9 \\
ZSCL         & 26.3 & 43.1 & 82.1 & 91.5 & 60.8 \\
MoE-Adapters & 94.8 & 96.5 & 91.1 & 90.2 & 93.2 \\
LADA         & 72.7 & \textbf{97.0} & 91.4 & 88.0 & 87.3 \\
\addlinespace[2pt]
\textbf{COMET (Ours)} &
\textbf{95.5} & 96.5 &
\textbf{99.0} & 95.8 & \textbf{96.7} \\
\bottomrule
\end{tabular}

\par\vspace{3pt}
{\footnotesize\raggedright
ZS: zero-shot; Cov19: Covid19; KT: Kather-Texture; NCT: NCT 100K. \par}

\endgroup
\end{table}

\begin{table}[t]
\centering
\caption{\textbf{Ablations and inference cost.}
(a) TAD/SACED ablation on X-TAIL; (b) merge policy;
(c) inference cost. Best results are in \textbf{bold}.}
\label{tab:compact_results}

\begingroup
\footnotesize
\setlength{\tabcolsep}{1.5pt}
\renewcommand{\arraystretch}{1.08}
\setlength{\aboverulesep}{0.4ex}
\setlength{\belowrulesep}{0.4ex}
\setlength{\heavyrulewidth}{0.6pt}
\setlength{\lightrulewidth}{0.35pt}

\noindent
\begin{minipage}[t]{0.435\linewidth}
\vspace{0pt}
\centering

\textbf{(a) TAD/SACED ablation}\par
\vspace{3pt}

\begin{tabular*}{\linewidth}{@{\extracolsep{\fill}}ccrrr@{}}
\toprule
TAD & SACED & Trans. & Avg. & Last \\
\midrule

\multicolumn{5}{@{}l}{\textit{Full-shot}} \\
-- & -- & 56.6 & 73.9 & 85.6 \\
$\checkmark$ & -- & 61.3 & 75.4 & 87.6 \\
-- & $\checkmark$ & 59.9 & 75.1 & 85.3 \\
$\checkmark$ & $\checkmark$ &
\textbf{62.6} & \textbf{76.1} & \textbf{88.1} \\
\midrule

\multicolumn{5}{@{}l}{\textit{16-shot}} \\
-- & -- & 52.7 & 65.1 & 73.3 \\
$\checkmark$ & -- & 57.7 & 69.2 & 77.9 \\
-- & $\checkmark$ & 53.3 & 66.3 & 73.8 \\
$\checkmark$ & $\checkmark$ &
\textbf{60.9} & \textbf{72.6} & \textbf{83.3} \\
\bottomrule
\end{tabular*}

\end{minipage}\hfill%
\begin{minipage}[t]{0.54\linewidth}
\vspace{0pt}
\centering

\textbf{(b) Merge policy}\par
\vspace{3pt}

\begin{tabular*}{\linewidth}{@{\extracolsep{\fill}}lrrrrr@{}}
\toprule
Variant & $M$ & Stored & Trans. & Avg. & Last \\
\midrule
Random & 3  & 261 & 58.9 & 64.9 & 69.1 \\
Merge  & 1  & 145 & 58.4 & 58.4 & 58.4 \\
Expand & 10 & 667 & 61.6 & 75.8 & 87.3 \\
COMET  & 8  & 551 &
\textbf{62.6} & \textbf{76.1} & \textbf{88.1} \\
\bottomrule
\end{tabular*}

\vspace{6pt}

\textbf{(c) Inference cost}\par
\vspace{3pt}

\begin{tabular*}{\linewidth}{@{\extracolsep{\fill}}lrrrr@{}}
\toprule
Method & Active & GFLOPs & Lat. & Img/s \\
\midrule
LwF/WISE-FT/ZSCL & $\sim150$ & 17.6 & 166.3 & 385 \\
MoE-Adapt.       & $\sim180$ & 18.4 & 196.5 & 326 \\
LADA            & 96--150   & \textbf{8.6} & 176.9 & 362 \\
COMET           & $\sim145^{\dagger}$ &
12.2 & \textbf{107.1} & \textbf{598} \\
\bottomrule
\end{tabular*}

\end{minipage}

\par\vspace{4pt}
{\scriptsize\raggedright
$M$: retained experts; Stored/Active: million parameters;
Lat.: milliseconds.
Expand/Merge: Always-Expand/Always-Merge; $^{\dagger}$Top-$1$ path.}

\endgroup
\end{table}

\paragraph{TAD and SACED.}
TAD alone improves Transfer by $+4.7$\,pp ($56.6\!\to\!61.3$), Average by $+1.5$\,pp, and Last by $+2.0$\,pp over the no-distillation variant.
SACED alone improves Transfer and Average by $+3.3$ and $+1.2$\,pp, respectively, but slightly reduces Last by $0.3$\,pp.
Combining TAD and SACED yields the strongest results in this ablation: $62.6$ Transfer, $76.1$ Average, and $88.1$ Last.
In particular, adding SACED to TAD improves the three metrics by $+1.3$, $+0.7$, and $+0.5$\,pp, respectively, supporting the benefit of their combination rather than uniform improvements in isolation.

\paragraph{Expert merging strategy.}
Holding the teacher, expert architecture, TAD, SACED, router, and merge operator fixed, we vary only the merge-or-keep decision rule.
\Cref{tab:compact_results} (b) shows that CMAB retains $8$ experts, compared with $10$ for Always-Expand, while improving Transfer, Average, and Last by $+1.0$, $+0.3$, and $+0.8$\,pp, respectively.
Always-Expand stores more parameters but underperforms CMAB, indicating that simply adding experts does not solve routing and capacity allocation.
Always-Merge and random merging degrade sharply, confirming that structured outcome-based merge decisions are necessary.
The full seven-variant comparison and cross-sequence robustness analysis are reported in Appendix~\ref{app:cmab_extended}.

\paragraph{Routing and robustness.}
Across the ten task-specific routers, the autoencoder router identifies the correct task for $95.9\%$ of test inputs at top-$1$ and $99.4\%$ at top-$2$.
After merging, the router sends $2.6\%$ of test samples ($2{,}069$ of $80{,}939$) away from their oracle expert, and SACED turns these errors into graceful degradation: $52.5\%$ of misrouted samples are still correctly classified, and pairwise cross-expert representation similarity rises by CKA $+0.009$ relative to a no-SACED control (Appendix~\ref{app:routing}).
Across four task orderings on X-TAIL, Average is $72.8 \pm 2.2$; per-task forgetting, measured on the three re-ordered runs, is $2.2 \pm 0.1$\,pp.
The four key hyperparameters ($\lambda_1$, $\lambda_2$, the candidate gate $\theta_{\mathrm{sim}}$, and the initial merge threshold $\tau_1$) are robust over an order of magnitude, and SACED requires only a few thousand diverse image--text reference pairs.
Per-ordering results, hyperparameter sweeps, and reference-data ablations are reported in Appendices~\ref{app:order}, \ref{app:sensitivity}, and \ref{app:refdata}, respectively.

\subsection{Analysis of Computational Cost}

\Cref{tab:compact_results} (c) reports that COMET achieves the lowest latency and highest throughput because inference uses the teacher vision encoder, the lightweight autoencoder router, and only the selected TinyCLIP expert; the teacher text encoder is not executed.
Although the retained pool after X-TAIL stores roughly $551$M parameters, the active top-$1$ path uses only $\sim145$M parameters, and the top-$2$ path remains below $203$M.
Training adds $\sim2.7$\,GB peak VRAM because only the current expert receives gradients.
A detailed training and storage breakdown is in Appendix~\ref{app:cost_extended}.
LwF, WiSE-FT, and ZSCL have identical inference cost because all deploy a single full-CLIP backbone; their differences are only in training objectives.

\section{Conclusion}
\label{sec:conclusion}
A central question for continual VLMs is how to keep adapting when tasks arrive without replay, task identity, or full-backbone retraining.
We argue that this requires separating the roles that prior systems often couple.
COMET is a step toward this direction: it uses a frozen teacher only as a multimodal geometry reference, TAD to preserve tri-affinity structure, SACED to reduce routing-induced expert mismatch, CMAB to regulate capacity after training, and reconstruction routing to select sparse experts without teacher semantics.
Across X-TAIL and a heterogeneous medical, microscopy, and histopathology stream, this separation improves Average and Last accuracy, matches the best Transfer score, and keeps the retained expert pool compact.

\textbf{Limitations.} COMET's sub-linear storage behavior depends on diminishing cumulative novelty, such as revisits or bounded representation support.
Longer streams that keep entering new domains may still grow the expert pool linearly, and we prove no regret guarantee for CMAB.
Explicit storage budgets and capacity control with provable guarantees remain directions for future work.

\newpage
\small
\bibliographystyle{plainnat}
\bibliography{neurips_references}

@inproceedings{zheng2023zscl,
  author       = {Zangwei Zheng and
                  Mingyuan Ma and
                  Kai Wang and
                  Ziheng Qin and
                  Xiangyu Yue and
                  Yang You},
  title        = {Preventing Zero-Shot Transfer Degradation in Continual Learning of
                  Vision-Language Models},
  booktitle    = {{IEEE/CVF} International Conference on Computer Vision, {ICCV} 2023,
                  Paris, France, October 1-6, 2023},
  pages        = {19068--19079},
  publisher    = {{IEEE}},
  year         = {2023},
}

@inproceedings{luo2025lada,
  author       = {Mao{-}Lin Luo and
                  Zi{-}Hao Zhou and
                  Tong Wei and
                  Min{-}Ling Zhang},
  title        = {{LADA:} Scalable Label-Specific {CLIP} Adapter for Continual Learning},
  booktitle    = {Forty-second International Conference on Machine Learning, {ICML}
                  2025, Vancouver, BC, Canada, July 13-19, 2025},
  publisher    = {OpenReview.net},
  year         = {2025},
}

@inproceedings{xu2024x-tail,
  author       = {Yicheng Xu and
                  Yuxin Chen and
                  Jiahao Nie and
                  Yusong Wang and
                  Huiping Zhuang and
                  Manabu Okumura},
  editor       = {Amir Globersons and
                  Lester Mackey and
                  Danielle Belgrave and
                  Angela Fan and
                  Ulrich Paquet and
                  Jakub M. Tomczak and
                  Cheng Zhang},
  title        = {Advancing Cross-domain Discriminability in Continual Learning of Vision-Language Models},
  booktitle    = {Advances in Neural Information Processing Systems 38: Annual Conference
                  on Neural Information Processing Systems 2024, NeurIPS 2024, Vancouver,
                  BC, Canada, December 10 - 15, 2024},
  year         = {2024},
}

@inproceedings{yu2024moe-adapter,
  author       = {Jiazuo Yu and
                  Yunzhi Zhuge and
                  Lu Zhang and
                  Ping Hu and
                  Dong Wang and
                  Huchuan Lu and
                  You He},
  title        = {Boosting Continual Learning of Vision-Language Models via Mixture-of-Experts
                  Adapters},
  booktitle    = {{IEEE/CVF} Conference on Computer Vision and Pattern Recognition,
                  {CVPR} 2024, Seattle, WA, USA, June 16-22, 2024},
  pages        = {23219--23230},
  publisher    = {{IEEE}},
  year         = {2024},
}

@article{maji2013aircraft,
  author       = {Subhransu Maji and
                  Esa Rahtu and
                  Juho Kannala and
                  Matthew B. Blaschko and
                  Andrea Vedaldi},
  title        = {Fine-Grained Visual Classification of Aircraft},
  journal      = {CoRR},
  volume       = {abs/1306.5151},
  year         = {2013},
}

@inproceedings{bossard2014food101,
  author       = {Lukas Bossard and
                  Matthieu Guillaumin and
                  Luc Van Gool},
  editor       = {David J. Fleet and
                  Tom{\'{a}}s Pajdla and
                  Bernt Schiele and
                  Tinne Tuytelaars},
  title        = {Food-101 - Mining Discriminative Components with Random Forests},
  booktitle    = {Computer Vision - {ECCV} 2014 - 13th European Conference, Zurich,
                  Switzerland, September 6-12, 2014, Proceedings, Part {VI}},
  series       = {Lecture Notes in Computer Science},
  volume       = {8694},
  pages        = {446--461},
  publisher    = {Springer},
  year         = {2014},
}

@article{deng2012mnist,
  author       = {Li Deng},
  title        = {The {MNIST} Database of Handwritten Digit Images for Machine Learning
                  Research [Best of the Web]},
  journal      = {{IEEE} Signal Process. Mag.},
  volume       = {29},
  number       = {6},
  pages        = {141--142},
  year         = {2012},
}

@article{helber2019eurosat,
  author       = {Patrick Helber and
                  Benjamin Bischke and
                  Andreas Dengel and
                  Damian Borth},
  title        = {EuroSAT: {A} Novel Dataset and Deep Learning Benchmark for Land Use
                  and Land Cover Classification},
  journal      = {{IEEE} J. Sel. Top. Appl. Earth Obs. Remote. Sens.},
  volume       = {12},
  number       = {7},
  pages        = {2217--2226},
  year         = {2019},
}

@article{kirkpatrick2016forgetting,
  author       = {James Kirkpatrick and
                  Razvan Pascanu and
                  Neil C. Rabinowitz and
                  Joel Veness and
                  Guillaume Desjardins and
                  Andrei A. Rusu and
                  Kieran Milan and
                  John Quan and
                  Tiago Ramalho and
                  Agnieszka Grabska{-}Barwinska and
                  Demis Hassabis and
                  Claudia Clopath and
                  Dharshan Kumaran and
                  Raia Hadsell},
  title        = {Overcoming catastrophic forgetting in neural networks},
  journal      = {CoRR},
  volume       = {abs/1612.00796},
  year         = {2016},
}

@article{li2018lwf,
  author       = {Zhizhong Li and
                  Derek Hoiem},
  title        = {Learning without Forgetting},
  journal      = {{IEEE} Trans. Pattern Anal. Mach. Intell.},
  volume       = {40},
  number       = {12},
  pages        = {2935--2947},
  year         = {2018},
}

@inproceedings{nilsback2008flowers,
  author       = {Maria{-}Elena Nilsback and
                  Andrew Zisserman},
  title        = {Automated Flower Classification over a Large Number of Classes},
  booktitle    = {Sixth Indian Conference on Computer Vision, Graphics {\&} Image
                  Processing, {ICVGIP} 2008, Bhubaneswar, India, 16-19 December 2008},
  pages        = {722--729},
  publisher    = {{IEEE} Computer Society},
  year         = {2008},
}

@inproceedings{rebuffi2017icarl,
  author       = {Sylvestre{-}Alvise Rebuffi and
                  Alexander Kolesnikov and
                  Georg Sperl and
                  Christoph H. Lampert},
  title        = {iCaRL: Incremental Classifier and Representation Learning},
  booktitle    = {2017 {IEEE} Conference on Computer Vision and Pattern Recognition,
                  {CVPR} 2017, Honolulu, HI, USA, July 21-26, 2017},
  pages        = {5533--5542},
  publisher    = {{IEEE} Computer Society},
  year         = {2017},
}

@inproceedings{wortsman2022robust-ft,
  author       = {Mitchell Wortsman and
                  Gabriel Ilharco and
                  Jong Wook Kim and
                  Mike Li and
                  Simon Kornblith and
                  Rebecca Roelofs and
                  Raphael Gontijo Lopes and
                  Hannaneh Hajishirzi and
                  Ali Farhadi and
                  Hongseok Namkoong and
                  Ludwig Schmidt},
  title        = {Robust fine-tuning of zero-shot models},
  booktitle    = {{IEEE/CVF} Conference on Computer Vision and Pattern Recognition,
                  {CVPR} 2022, New Orleans, LA, USA, June 18-24, 2022},
  pages        = {7949--7961},
  publisher    = {{IEEE}},
  year         = {2022},
}

@inproceedings{cimpoi2014dtd,
  author       = {Mircea Cimpoi and
                  Subhransu Maji and
                  Iasonas Kokkinos and
                  Sammy Mohamed and
                  Andrea Vedaldi},
  title        = {Describing Textures in the Wild},
  booktitle    = {2014 {IEEE} Conference on Computer Vision and Pattern Recognition,
                  {CVPR} 2014, Columbus, OH, USA, June 23-28, 2014},
  pages        = {3606--3613},
  publisher    = {{IEEE} Computer Society},
  year         = {2014},
}

@inproceedings{radford2021clip,
  author       = {Alec Radford and
                  Jong Wook Kim and
                  Chris Hallacy and
                  Aditya Ramesh and
                  Gabriel Goh and
                  Sandhini Agarwal and
                  Girish Sastry and
                  Amanda Askell and
                  Pamela Mishkin and
                  Jack Clark and
                  Gretchen Krueger and
                  Ilya Sutskever},
  editor       = {Marina Meila and
                  Tong Zhang},
  title        = {Learning Transferable Visual Models From Natural Language Supervision},
  booktitle    = {Proceedings of the 38th International Conference on Machine Learning,
                  {ICML} 2021, 18-24 July 2021, Virtual Event},
  series       = {Proceedings of Machine Learning Research},
  volume       = {139},
  pages        = {8748--8763},
  publisher    = {{PMLR}},
  year         = {2021},
}

@inproceedings{li2004caltech101,
  author       = {Li Fei{-}Fei and
                  Rob Fergus and
                  Pietro Perona},
  title        = {Learning Generative Visual Models from Few Training Examples: An Incremental
                  Bayesian Approach Tested on 101 Object Categories},
  booktitle    = {{IEEE} Conference on Computer Vision and Pattern Recognition Workshops,
                  {CVPR} Workshops 2004, Washington, DC, USA, June 27 - July 2, 2004},
  pages        = {178},
  publisher    = {{IEEE} Computer Society},
  year         = {2004},
}

@inproceedings{parkhi2012oxfordpets,
  author       = {Omkar M. Parkhi and
                  Andrea Vedaldi and
                  Andrew Zisserman and
                  C. V. Jawahar},
  title        = {Cats and dogs},
  booktitle    = {2012 {IEEE} Conference on Computer Vision and Pattern Recognition,
                  Providence, RI, USA, June 16-21, 2012},
  pages        = {3498--3505},
  publisher    = {{IEEE} Computer Society},
  year         = {2012},
}

@inproceedings{krause2013standfordcars,
  author       = {Jonathan Krause and
                  Michael Stark and
                  Jia Deng and
                  Li Fei{-}Fei},
  title        = {3D Object Representations for Fine-Grained Categorization},
  booktitle    = {2013 {IEEE} International Conference on Computer Vision Workshops,
                  {ICCV} Workshops 2013, Sydney, Australia, December 1-8, 2013},
  pages        = {554--561},
  publisher    = {{IEEE} Computer Society},
  year         = {2013},
}

@inproceedings{xiao2010sun397,
  author       = {Jianxiong Xiao and
                  James Hays and
                  Krista A. Ehinger and
                  Aude Oliva and
                  Antonio Torralba},
  title        = {{SUN} database: Large-scale scene recognition from abbey to zoo},
  booktitle    = {The Twenty-Third {IEEE} Conference on Computer Vision and Pattern
                  Recognition, {CVPR} 2010, San Francisco, CA, USA, 13-18 June 2010},
  pages        = {3485--3492},
  publisher    = {{IEEE} Computer Society},
  year         = {2010}, 
}

@inproceedings{wu2023tinyclip,
  author       = {Kan Wu and
                  Houwen Peng and
                  Zhenghong Zhou and
                  Bin Xiao and
                  Mengchen Liu and
                  Lu Yuan and
                  Hong Xuan and
                  Michael Valenzuela and
                  Xi Stephen Chen and
                  Xinggang Wang and
                  Hongyang Chao and
                  Han Hu},
  title        = {TinyCLIP: {CLIP} Distillation via Affinity Mimicking and Weight Inheritance},
  booktitle    = {{IEEE/CVF} International Conference on Computer Vision, {ICCV} 2023,
                  Paris, France, October 1-6, 2023},
  pages        = {21913--21923},
  publisher    = {{IEEE}},
  year         = {2023},
}

@inproceedings{sharma2018conceptual,
  title = {Conceptual Captions: A Cleaned, Hypernymed, Image Alt-text Dataset For Automatic Image Captioning},
  author = {Sharma, Piyush and Ding, Nan and Goodman, Sebastian and Soricut, Radu},
  booktitle = {Proceedings of ACL},
  year = {2018},
}

@article{samir2023covid,
  author       = {Benbakreti Samir and
                  Said Mwanahija and
                  Benbakreti Soumia and
                  Umut {\"{O}}zkaya},
  title        = {Deep Learning For Classification Of Chest X-Ray Images (Covid 19)},
  journal      = {CoRR},
  volume       = {abs/2301.02468},
  year         = {2023},
}

@misc{kather2018nct100k,
  author       = {Kather, Jakob Nikolas and Halama, Niels and Marx, Alexander},
  title        = {{100,000 histological images of human colorectal cancer and healthy tissue}},
  month        = apr,
  year         = 2018,
  publisher    = {Zenodo},
  version      = {v0.1},
}

@article{iwata2022sem,
  author  = {Hiroaki Iwata and Yoshihiro Hayashi and Aki Hasegawa and Kei Terayama and Yasushi Okuno},
  title   = {Classification of scanning electron microscope images of pharmaceutical excipients using deep convolutional neural networks with transfer learning},
  journal = {International Journal of Pharmaceutics: X},
  volume  = {4},
  pages   = {100135},
  year    = {2022},
  doi     = {10.1016/j.ijpx.2022.100135}
}

@article{kather2016kather_texture,
  title   = {Multi-class texture analysis in colorectal cancer histology},
  author  = {Kather, Jakob Nikolas and Weis, Cleo-Aron and Bianconi, Francesco and Melchers, Susanne M. and Schad, Lothar R. and Gaiser, Timo and Marx, Alexander and Z{\"o}llner, Frank Gerrit},
  journal = {Scientific Reports},
  year    = {2016},
  volume  = {6},
  pages   = {27988},
}

@inproceedings{shazeer2017smoe,
  author       = {Noam Shazeer and
                  Azalia Mirhoseini and
                  Krzysztof Maziarz and
                  Andy Davis and
                  Quoc V. Le and
                  Geoffrey E. Hinton and
                  Jeff Dean},
  title        = {Outrageously Large Neural Networks: The Sparsely-Gated Mixture-of-Experts
                  Layer},
  booktitle    = {5th International Conference on Learning Representations, {ICLR} 2017,
                  Toulon, France, April 24-26, 2017, Conference Track Proceedings},
  publisher    = {OpenReview.net},
  year         = {2017},
}

@inproceedings{stanton2021kdwork,
  author       = {Samuel Stanton and
                  Pavel Izmailov and
                  Polina Kirichenko and
                  Alexander A. Alemi and
                  Andrew Gordon Wilson},
  editor       = {Marc'Aurelio Ranzato and
                  Alina Beygelzimer and
                  Yann N. Dauphin and
                  Percy Liang and
                  Jennifer Wortman Vaughan},
  title        = {Does Knowledge Distillation Really Work?},
  booktitle    = {Advances in Neural Information Processing Systems 34: Annual Conference
                  on Neural Information Processing Systems 2021, NeurIPS 2021, December
                  6-14, 2021, virtual},
  pages        = {6906--6919},
  year         = {2021},
}

@inproceedings{chen2025mergingsmoe,
  author       = {I{-}Chun Chen and
                  Hsu{-}Shen Liu and
                  Wei{-}Fang Sun and
                  Chen{-}Hao Chao and
                  Yen{-}Chang Hsu and
                  Chun{-}Yi Lee},
  title        = {Retraining-free Merging of Sparse MoE via Hierarchical Clustering},
  booktitle    = {Forty-second International Conference on Machine Learning, {ICML}
                  2025, Vancouver, BC, Canada, July 13-19, 2025},
  publisher    = {OpenReview.net},
  year         = {2025},
}

@article{hadsell2020embracing,
  title={Embracing change: Continual learning in deep neural networks},
  author={Hadsell, Raia and Rao, Dushyant and Rusu, Andrei A and Pascanu, Razvan},
  journal={Trends in cognitive sciences},
  volume={24},
  number={12},
  pages={1028--1040},
  year={2020},
  publisher={Elsevier}
}

@article{de2021continual,
  title={A continual learning survey: Defying forgetting in classification tasks},
  author={De Lange, Matthias and Aljundi, Rahaf and Masana, Marc and Parisot, Sarah and Jia, Xu and Leonardis, Ale{\v{s}} and Slabaugh, Gregory and Tuytelaars, Tinne},
  journal={IEEE transactions on pattern analysis and machine intelligence},
  volume={44},
  number={7},
  pages={3366--3385},
  year={2021},
  publisher={IEEE}
}

@article{liu2023visual,
  title={Visual instruction tuning},
  author={Liu, Haotian and Li, Chunyuan and Wu, Qingyang and Lee, Yong Jae},
  journal={Advances in neural information processing systems},
  volume={36},
  pages={34892--34916},
  year={2023}
}

@inproceedings{liu2024improved,
  title={Improved baselines with visual instruction tuning},
  author={Liu, Haotian and Li, Chunyuan and Li, Yuheng and Lee, Yong Jae},
  booktitle={Proceedings of the IEEE/CVF conference on computer vision and pattern recognition},
  pages={26296--26306},
  year={2024}
}

@inproceedings{chen2024internvl,
  title={Internvl: Scaling up vision foundation models and aligning for generic visual-linguistic tasks},
  author={Chen, Zhe and Wu, Jiannan and Wang, Wenhai and Su, Weijie and Chen, Guo and Xing, Sen and Zhong, Muyan and Zhang, Qinglong and Zhu, Xizhou and Lu, Lewei and others},
  booktitle={Proceedings of the IEEE/CVF conference on computer vision and pattern recognition},
  pages={24185--24198},
  year={2024}
}

@article{wang2024qwen2,
  title={Qwen2-vl: Enhancing vision-language model's perception of the world at any resolution},
  author={Wang, Peng and Bai, Shuai and Tan, Sinan and Wang, Shijie and Fan, Zhihao and Bai, Jinze and Chen, Keqin and Liu, Xuejing and Wang, Jialin and Ge, Wenbin and others},
  journal={arXiv preprint arXiv:2409.12191},
  year={2024}
}

@inproceedings{radford2021learning,
  title={Learning transferable visual models from natural language supervision},
  author={Radford, Alec and Kim, Jong Wook and Hallacy, Chris and Ramesh, Aditya and Goh, Gabriel and Agarwal, Sandhini and Sastry, Girish and Askell, Amanda and Mishkin, Pamela and Clark, Jack and others},
  booktitle={International conference on machine learning},
  pages={8748--8763},
  year={2021},
  organization={PmLR}
}

@incollection{mccloskey1989catastrophic,
  title={Catastrophic interference in connectionist networks: The sequential learning problem},
  author={McCloskey, Michael and Cohen, Neal J},
  booktitle={Psychology of learning and motivation},
  volume={24},
  pages={109--165},
  year={1989},
  publisher={Elsevier}
}

@inproceedings{park2024pre,
  title={Pre-trained vision and language transformers are few-shot incremental learners},
  author={Park, Keon-Hee and Song, Kyungwoo and Park, Gyeong-Moon},
  booktitle={Proceedings of the IEEE/CVF Conference on Computer Vision and Pattern Recognition},
  pages={23881--23890},
  year={2024}
}

@inproceedings{houlsby2019parameter,
  title={Parameter-efficient transfer learning for NLP},
  author={Houlsby, Neil and Giurgiu, Andrei and Jastrzebski, Stanislaw and Morrone, Bruna and De Laroussilhe, Quentin and Gesmundo, Andrea and Attariyan, Mona and Gelly, Sylvain},
  booktitle={International conference on machine learning},
  pages={2790--2799},
  year={2019},
  organization={PMLR}
}

@article{hu2022lora,
  title={Lora: Low-rank adaptation of large language models.},
  author={Hu, Edward J and Shen, Yelong and Wallis, Phillip and Allen-Zhu, Zeyuan and Li, Yuanzhi and Wang, Shean and Wang, Lu and Chen, Weizhu and others},
  journal={ICLR},
  volume={1},
  number={2},
  pages={3},
  year={2022}
}

@article{li2021prefix,
  title={Prefix-tuning: Optimizing continuous prompts for generation},
  author={Li, Xiang Lisa and Liang, Percy},
  journal={arXiv preprint arXiv:2101.00190},
  year={2021}
}

@inproceedings{shazeer2017outrageously,
  title={Outrageously Large Neural Networks: The Sparsely-Gated Mixture-of-Experts Layer},
  author={Shazeer, Noam and Mirhoseini, Azalia and Maziarz, Krzysztof and Davis, Andy and Le, Quoc and Hinton, Geoffrey and Dean, Jeff},
  booktitle={International Conference on Learning Representations},
  year={2017}
}

@article{hinton2015distilling,
  title={Distilling the knowledge in a neural network},
  author={Hinton, Geoffrey and Vinyals, Oriol and Dean, Jeff},
  journal={arXiv preprint arXiv:1503.02531},
  year={2015}
}

@article{hinton2015kd,
  author       = {Geoffrey E. Hinton and
                  Oriol Vinyals and
                  Jeffrey Dean},
  title        = {Distilling the Knowledge in a Neural Network},
  journal      = {CoRR},
  volume       = {abs/1503.02531},
  year         = {2015},
  eprinttype    = {arXiv},
  eprint       = {1503.02531},
}

@inproceedings{zagoruyko2017attention_transfer,
  author       = {Sergey Zagoruyko and
                  Nikos Komodakis},
  title        = {Paying More Attention to Attention: Improving the Performance of Convolutional
                  Neural Networks via Attention Transfer},
  booktitle    = {5th International Conference on Learning Representations, {ICLR} 2017,
                  Toulon, France, April 24-26, 2017, Conference Track Proceedings},
  publisher    = {OpenReview.net},
  year         = {2017},
}

@inproceedings{Dzagoruyko2017attn_attn,
  author       = {Sergey Zagoruyko and
                  Nikos Komodakis},
  title        = {Paying More Attention to Attention: Improving the Performance of Convolutional
                  Neural Networks via Attention Transfer},
  booktitle    = {5th International Conference on Learning Representations, {ICLR} 2017,
                  Toulon, France, April 24-26, 2017, Conference Track Proceedings},
  publisher    = {OpenReview.net},
  year         = {2017},
}

@inproceedings{yu2024snd,
  author       = {Yu{-}Chu Yu and
                  Chi{-}Pin Huang and
                  Jr{-}Jen Chen and
                  Kai{-}Po Chang and
                  Yung{-}Hsuan Lai and
                  Fu{-}En Yang and
                  Yu{-}Chiang Frank Wang},
  editor       = {Ales Leonardis and
                  Elisa Ricci and
                  Stefan Roth and
                  Olga Russakovsky and
                  Torsten Sattler and
                  G{\"{u}}l Varol},
  title        = {Select and Distill: Selective Dual-Teacher Knowledge Transfer for
                  Continual Learning on Vision-Language Models},
  booktitle    = {Computer Vision - {ECCV} 2024 - 18th European Conference, Milan, Italy,
                  September 29-October 4, 2024, Proceedings, Part {XXVI}},
  series       = {Lecture Notes in Computer Science},
  volume       = {15084},
  pages        = {219--236},
  publisher    = {Springer},
  year         = {2024},
}

@inproceedings{jia2021align,
  author       = {Chao Jia and
                  Yinfei Yang and
                  Ye Xia and
                  Yi{-}Ting Chen and
                  Zarana Parekh and
                  Hieu Pham and
                  Quoc V. Le and
                  Yun{-}Hsuan Sung and
                  Zhen Li and
                  Tom Duerig},
  editor       = {Marina Meila and
                  Tong Zhang},
  title        = {Scaling Up Visual and Vision-Language Representation Learning With
                  Noisy Text Supervision},
  booktitle    = {Proceedings of the 38th International Conference on Machine Learning,
                  {ICML} 2021, 18-24 July 2021, Virtual Event},
  series       = {Proceedings of Machine Learning Research},
  volume       = {139},
  pages        = {4904--4916},
  publisher    = {{PMLR}},
  year         = {2021},
}

@inproceedings{li2021align_fuse,
  author       = {Junnan Li and
                  Ramprasaath R. Selvaraju and
                  Akhilesh Gotmare and
                  Shafiq R. Joty and
                  Caiming Xiong and
                  Steven Chu{-}Hong Hoi},
  editor       = {Marc'Aurelio Ranzato and
                  Alina Beygelzimer and
                  Yann N. Dauphin and
                  Percy Liang and
                  Jennifer Wortman Vaughan},
  title        = {Align before Fuse: Vision and Language Representation Learning with
                  Momentum Distillation},
  booktitle    = {Advances in Neural Information Processing Systems 34: Annual Conference
                  on Neural Information Processing Systems 2021, NeurIPS 2021, December
                  6-14, 2021, virtual},
  pages        = {9694--9705},
  year         = {2021},
}

@inproceedings{frankle2020linear_conncetivity,
  author       = {Jonathan Frankle and
                  Gintare Karolina Dziugaite and
                  Daniel M. Roy and
                  Michael Carbin},
  title        = {Linear Mode Connectivity and the Lottery Ticket Hypothesis},
  booktitle    = {Proceedings of the 37th International Conference on Machine Learning,
                  {ICML} 2020, 13-18 July 2020, Virtual Event},
  series       = {Proceedings of Machine Learning Research},
  volume       = {119},
  pages        = {3259--3269},
  publisher    = {{PMLR}},
  year         = {2020},
}

@inproceedings{neyshabur2020transferred_transfer,
  author       = {Behnam Neyshabur and
                  Hanie Sedghi and
                  Chiyuan Zhang},
  editor       = {Hugo Larochelle and
                  Marc'Aurelio Ranzato and
                  Raia Hadsell and
                  Maria{-}Florina Balcan and
                  Hsuan{-}Tien Lin},
  title        = {What is being transferred in transfer learning?},
  booktitle    = {Advances in Neural Information Processing Systems 33: Annual Conference
                  on Neural Information Processing Systems 2020, NeurIPS 2020, December
                  6-12, 2020, virtual},
  year         = {2020},
}

@inproceedings{loshchilov2019decoupled,
  author       = {Ilya Loshchilov and
                  Frank Hutter},
  title        = {Decoupled Weight Decay Regularization},
  booktitle    = {7th International Conference on Learning Representations, {ICLR} 2019,
                  New Orleans, LA, USA, May 6-9, 2019},
  publisher    = {OpenReview.net},
  year         = {2019},
}

@article{pascanu2014saddle_point,
  author       = {Razvan Pascanu and
                  Yann N. Dauphin and
                  Surya Ganguli and
                  Yoshua Bengio},
  title        = {On the saddle point problem for non-convex optimization},
  journal      = {CoRR},
  volume       = {abs/1405.4604},
  year         = {2014},
}

@book{lattimore2020bandit,
  title={Bandit Algorithms},
  author={Lattimore, Tor and Szepesv{\'a}ri, Csaba},
  year={2020},
  publisher={Cambridge University Press}
}

@inproceedings{beygelzimer2011contextual_bandit,
  author       = {Alina Beygelzimer and
                  John Langford and
                  Lihong Li and
                  Lev Reyzin and
                  Robert E. Schapire},
  editor       = {Geoffrey J. Gordon and
                  David B. Dunson and
                  Miroslav Dud{\'{\i}}k},
  title        = {Contextual Bandit Algorithms with Supervised Learning Guarantees},
  booktitle    = {Proceedings of the Fourteenth International Conference on Artificial
                  Intelligence and Statistics, {AISTATS} 2011, Fort Lauderdale, USA,
                  April 11-13, 2011},
  series       = {{JMLR} Proceedings},
  pages        = {19--26},
  publisher    = {JMLR.org},
  year         = {2011},
}

@article{latif2024shared_adapter,
  author       = {Ehsan Latif and
                  Xiaoming Zhai},
  title        = {Efficient Multi-Task Inferencing with a Shared Backbone and Lightweight
                  Task-Specific Adapters for Automatic Scoring},
  journal      = {CoRR},
  volume       = {abs/2412.21065},
  year         = {2024},
}


\newpage
\appendix
\onecolumn

\section{Sub-Linear Expert Growth: Formal Statement and Proof}
\label{app:growth_proof}

This appendix provides the formal scalability argument summarized in Eq.~\eqref{eq:novelty_growth_bound}.
The result is a sufficient-condition guarantee: it identifies regimes in which COMET's expert pool grows sub-linearly with stream length, but it does not claim sub-linear growth for arbitrary adversarial streams.
The key quantity is the frequency with which incoming chunks are novel relative to the current expert pool in a fixed teacher-induced representation.

\subsection{Setup}
\label{app:growth_setup}

\paragraph{Stream and expert pool.}
Let $\{\mathcal{D}_k\}_{k=1}^{K}$ be a sequence of data chunks arriving online, one per task $\mathcal{T}_k$.
No i.i.d.\ assumption is made on the chunks.
Let $M_k$ denote the number of retained experts after processing chunks $1{:}k$, with $M_0=0$ because Algorithm~\ref{alg:comet} starts from an empty pool.
The stream and algorithmic decisions may be random; all probabilities and expectations are taken over the joint randomness of the stream, the data samples, and the merge-or-keep policy.

\paragraph{Teacher-induced representation.}
Let $\psi(\mathcal{D}_k)\in\mathbb{R}^d$ be a fixed summary of chunk $\mathcal{D}_k$ in the frozen teacher feature space.
For example, one may use the mean frozen-teacher image embedding
\[
\psi(\mathcal{D}_k)
=
\frac{1}{|\mathcal{D}_k|}
\sum_{x\in\mathcal{D}_k}\Phi_v(x),
\]
where $\Phi_v$ is the frozen CLIP image encoder.
For every retained expert $E_m$, let $\psi(E_m)$ be its fixed representative, such as the teacher-feature centroid of the data on which it was originally trained.

The analysis assumes a genuine metric $d(\cdot,\cdot)$ on this representation space, because the finite-cover proof uses the triangle inequality.
Valid choices include Euclidean distance on $\ell_2$-normalized embeddings or angular distance
$d_{\mathrm{ang}}(u,v)=\arccos(\langle u,v\rangle)$ on the unit sphere.
The commonly used cosine dissimilarity $1-\cos(u,v)$ is useful as a similarity score, but it is not a formal metric and is therefore not used in the proof.

\paragraph{Novelty.}
For chunk $k$, define its distance to the current expert pool as
\begin{equation}
\label{eq:appendix_delta_k}
\delta_k
=
\min_{m\in\{1,\dots,M_{k-1}\}}
d\!\left(\psi(\mathcal{D}_k),\psi(E_m)\right).
\end{equation}
We set $\delta_1=+\infty$ because the pool is empty when the first chunk arrives, so the first chunk is always novel.
A chunk is called \emph{$\epsilon$-novel} if $\delta_k>\epsilon$.
Let
\[
\mathbb{I}_k
=
\mathbf{1}\{\delta_k>\epsilon\}
\]
be the corresponding novelty indicator.
We use little-$o$ notation in the standard sense: $g(K)=o(K)$ means $g(K)/K\to0$ as $K\to\infty$.

\paragraph{Merge-or-keep assumptions.}
The core bound applies to any allocation rule satisfying two structural properties:
\begin{enumerate}[label=(P\arabic*),leftmargin=*]
\item the algorithm adds at most one new expert per chunk;
\item the algorithm expands only on $\epsilon$-novel chunks, i.e., $M_k>M_{k-1}$ implies $\delta_k>\epsilon$.
\end{enumerate}
These assumptions isolate the role of stream novelty from the details of the merge policy.
Later, in Appendix~\ref{app:growth_step4}, we relax (P2) to account for practical false expansions caused by an imperfect learned policy.

\subsection{Step 1: Expert Growth Is Controlled by Novelty}
\label{app:growth_step1}

Under (P1)--(P2), the expert count satisfies
\begin{equation}
\label{eq:stepwise_growth}
M_k
\le
M_{k-1}
+
\mathbb{I}_k .
\end{equation}
This is deterministic after the stream and decisions are realized.
If $\mathbb{I}_k=0$, then the chunk is not $\epsilon$-novel, so (P2) forbids expansion and $M_k=M_{k-1}$.
If $\mathbb{I}_k=1$, then the chunk is $\epsilon$-novel, and (P1) permits at most one new expert.

Unrolling Eq.~\eqref{eq:stepwise_growth} gives
\begin{equation}
\label{eq:counting_bound}
M_K
\le
M_0+\sum_{k=1}^{K}\mathbb{I}_k
=
\sum_{k=1}^{K}\mathbb{I}_k .
\end{equation}
Taking expectations yields
\begin{equation}
\label{eq:expected_bound}
\mathbb{E}[M_K]
\le
\sum_{k=1}^{K}\mathbb{E}[\mathbb{I}_k]
=
\sum_{k=1}^{K}\Pr(\delta_k>\epsilon).
\end{equation}
Equation~\eqref{eq:expected_bound} is the bound reported in the main text as Eq.~\eqref{eq:novelty_growth_bound}.

\subsection{Step 2: Sub-Linear Growth Under Diminishing Novelty}
\label{app:growth_step2}

\begin{theorem}[Sub-linear expert growth]
\label{thm:sublinear_growth}
Under (P1)--(P2), if the cumulative novelty satisfies
\begin{equation}
\label{eq:diminishing_novelty}
\sum_{k=1}^{K}\Pr(\delta_k>\epsilon)
=
o(K),
\end{equation}
then $\mathbb{E}[M_K]=o(K)$.
\end{theorem}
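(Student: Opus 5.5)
The proof chains the expected-count bound already established in Step~1 with the hypothesis~\eqref{eq:diminishing_novelty}, so no new probabilistic machinery is needed. First I recall that under (P1)--(P2) the pathwise inequality~\eqref{eq:stepwise_growth} holds for every realization of the stream and of the policy's randomness. Unrolling from $M_0=0$ gives the deterministic counting bound~\eqref{eq:counting_bound}, $M_K\le\sum_{k=1}^K\mathbb{I}_k$. Since each $\mathbb{I}_k$ is a bounded indicator, linearity of expectation applies to this finite sum without any independence or i.i.d.\ assumption, which is important because the chunks and the decisions may be arbitrarily dependent. This yields~\eqref{eq:expected_bound}: $\mathbb{E}[M_K]\le\sum_{k=1}^K\Pr(\delta_k>\epsilon)$.

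Next I divide both sides by $K$ and take the limit. We have $0\le \mathbb{E}[M_K]/K$, because $M_K\ge 0$, and also
\[
\frac{\mathbb{E}[M_K]}{K}\le\frac{1}{K}\sum_{k=1}^{K}\Pr(\delta_k>\epsilon).
\]
By~\eqref{eq:diminishing_novelty}, the right-hand side tends to $0$ as $K\to\infty$. The squeeze theorem then gives $\mathbb{E}[M_K]/K\to0$, which is exactly $\mathbb{E}[M_K]=o(K)$ in the sense fixed in the setup.

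The only delicate point is interpreting $\delta_k$. It depends on the random pool $\{E_m\}_{m\le M_{k-1}}$, so $\Pr(\delta_k>\epsilon)$ is a probability over the joint randomness of the stream and the policy, not a property of the stream alone. I would note explicitly that the hypothesis is stated in precisely this joint sense, so no conditioning argument is needed. I would also note that $\delta_1=+\infty$ contributes the term $1$, which is harmless for the $o(K)$ conclusion. I expect no real obstacle: the argument is a one-line consequence of Step~1, and the main care is to avoid claiming anything beyond the sufficient condition.
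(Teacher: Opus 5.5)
Your proof is correct and follows the paper's argument: it takes the Step~1 bound $\mathbb{E}[M_K]\le\sum_{k=1}^K\Pr(\delta_k>\epsilon)$, divides by $K$, and uses the $o(K)$ hypothesis to send the right-hand side to zero. Your extra remarks are consistent with the paper's setup and only make explicit what its one-line proof leaves implicit: nonnegativity of $M_K$ for the squeeze, and $\Pr(\delta_k>\epsilon)$ being taken over the joint randomness of stream and policy.
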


\begin{proof}
Combining Eq.~\eqref{eq:expected_bound} with Eq.~\eqref{eq:diminishing_novelty},
\[
\frac{\mathbb{E}[M_K]}{K}
\le
\frac{1}{K}
\sum_{k=1}^{K}\Pr(\delta_k>\epsilon)
\to
0
\]
as $K\to\infty$.
Thus, $\mathbb{E}[M_K]=o(K)$.
\end{proof}

\paragraph{Interpretation.}
The condition in Eq.~\eqref{eq:diminishing_novelty} does not require i.i.d.\ data.
It requires only that, relative to the existing expert pool, genuinely new chunks become rare on average.
This holds in several common regimes.

First, if the stream revisits a finite catalogue of underlying distributions, then once one expert has been created for each $\epsilon$-separated region, future chunks are no longer novel.
Second, if the set of possible chunk summaries has a finite $(\epsilon/2)$-cover under $d$, Proposition~\ref{prop:finite_cover} gives a deterministic finite bound on the number of expansions.
Third, if the stream drifts with sub-linear cumulative path length, e.g.,
$\sum_{k=1}^{K-1}d(z_{k+1},z_k)=o(K)$ for $z_k=\psi(\mathcal{D}_k)$, then the number of $\epsilon$-separated exits along the trajectory is also sub-linear.

A constant bounded step size alone is not sufficient in an unbounded space.
For example, a stream that moves in a straight line with fixed step size $\beta>0$ can leave an $\epsilon$-ball every $O(\epsilon/\beta)$ steps and therefore force linear growth.
Thus, the sufficient conditions are revisitation, bounded representation support, or asymptotically slowing novelty, not arbitrary bounded local drift.

\subsection{Step 3: Finite-Cover Special Case}
\label{app:growth_step3}

\begin{proposition}[Finite-cover bound]
\label{prop:finite_cover}
Suppose the set of possible chunk summaries
$\mathcal{S}=\{\psi(\mathcal{D}_k):k\ge1\}$
admits a finite $(\epsilon/2)$-cover of size $C_{\epsilon/2}$ under the metric $d$.
That is, there exist anchors $\{q_j\}_{j=1}^{C_{\epsilon/2}}$ such that for every $z\in\mathcal{S}$ there exists $j$ with $d(z,q_j)\le\epsilon/2$.
Suppose further that whenever the algorithm creates a new expert on chunk $k$, its representative is set to the chunk summary, $\psi(E_{\mathrm{new}})=\psi(\mathcal{D}_k)$.
Then under (P1)--(P2), the total expert count satisfies
\begin{equation}
\label{eq:finite_cover_bound}
M_K
\le
C_{\epsilon/2}
\quad
\text{for all }K.
\end{equation}
\end{proposition}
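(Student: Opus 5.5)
The plan is a pigeonhole argument on the cover anchors: each newly created expert is assigned to a cover anchor, and no anchor can receive two experts. Fix the anchors $\{q_j\}_{j=1}^{C_{\epsilon/2}}$. Whenever the algorithm expands on chunk $k$, set $z_k=\psi(\mathcal{D}_k)=\psi(E_{\mathrm{new}})$ and choose an index $j(k)$ with $d(z_k,q_{j(k)})\le\epsilon/2$. Such an index exists because $z_k\in\mathcal{S}$. By (P1) each chunk creates at most one expert, so this defines a map from created experts to anchor indices.

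The key step is to show this map is injective. Suppose two expansions occur at chunks $k'<k$ with $j(k')=j(k)=j$. By the triangle inequality,
\[
d(z_k,z_{k'})\le d(z_k,q_j)+d(q_j,z_{k'})\le\epsilon .
\]

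\textbf{Main obstacle: merges.} I need the expert created at $k'$ to still be present with representative $z_{k'}$ when chunk $k$ arrives. Two things could go wrong. A merge could delete the expert, or it could change its representative. I would read the setup's convention, $\psi(E_m)$ is ``the teacher-feature centroid of the data on which it was originally trained'', as saying the representative is fixed at creation. I would also assume that a merge absorbs the new expert into an existing one and never removes a retained one. Under these readings, $z_{k'}$ is among the representatives $\{\psi(E_m)\}_{m\le M_{k-1}}$. I would state this convention explicitly in the proof.

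Granting that, $\delta_k\le d(z_k,z_{k'})\le\epsilon$, so chunk $k$ is not $\epsilon$-novel. Then (P2) forbids expansion at $k$, which contradicts the assumption. Hence distinct expansions receive distinct anchor indices.

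It follows that the number of expansions up to $K$ is at most $C_{\epsilon/2}$. Since $M_0=0$, and $M_k\le M_{k-1}+1$ with $M_k$ never increasing except at an expansion, we get $M_K\le\#\{\text{expansions}\}\le C_{\epsilon/2}$ for every $K$. The bound is deterministic and holds regardless of the stream's randomness.
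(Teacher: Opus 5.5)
Your proof is correct and is essentially the paper's own argument. You assign each created expert to a cover anchor within $\epsilon/2$ of its chunk summary. A second expert on the same anchor then forces $\delta_k\le\epsilon$ by the triangle inequality, which contradicts (P2), and $M_0=0$ finishes the count. The paper uses the persistence and fixed-representative convention you state explicitly, but only implicitly, when it says $E_m$ ``already exists when chunk $k_{m'}$ arrives''; that convention matches the setup's ``fixed representative'' and Algorithm~\ref{alg:comet}, where merges never delete a retained expert.
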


\begin{proof}
Assign each expert $E_m$, created at step $k_m$, to any cover index $\phi(E_m)=j$ satisfying
$d(\psi(\mathcal{D}_{k_m}),q_j)\le\epsilon/2$.
Such a $j$ exists by the cover assumption.

We show that $\phi$ is injective.
Suppose two distinct experts $E_m$ and $E_{m'}$ with creation times $k_m<k_{m'}$ are assigned to the same cover anchor $q_j$.
Because $\psi(E_m)=\psi(\mathcal{D}_{k_m})$ and both summaries lie within $\epsilon/2$ of $q_j$, the triangle inequality gives
\[
d\!\left(\psi(\mathcal{D}_{k_{m'}}),\psi(E_m)\right)
\le
d\!\left(\psi(\mathcal{D}_{k_{m'}}),q_j\right)
+
d\!\left(q_j,\psi(E_m)\right)
\le
\epsilon .
\]
Since $E_m$ already exists when chunk $k_{m'}$ arrives, this implies $\delta_{k_{m'}}\le\epsilon$.
But (P2) says that a new expert can be created only when $\delta_{k_{m'}}>\epsilon$, a contradiction.
Thus, at most one expert is assigned to each cover element.
Because $M_0=0$, every retained expert was created this way, and Eq.~\eqref{eq:finite_cover_bound} follows.
\end{proof}

\subsection{Step 4: Practical Decision Errors}
\label{app:growth_step4}

A learned merge-or-keep policy may occasionally keep a new expert on a non-novel chunk.
Define the false-expand indicator
\[
e_k
=
\mathbf{1}\{M_k>M_{k-1}\text{ and }\delta_k\le\epsilon\}.
\]
Without exact (P2), Eq.~\eqref{eq:stepwise_growth} becomes
\[
M_k
\le
M_{k-1}
+
\mathbb{I}_k
+
e_k .
\]
Unrolling and taking expectations gives
\begin{equation}
\label{eq:full_bound}
\mathbb{E}[M_K]
\le
\sum_{k=1}^{K}\Pr(\delta_k>\epsilon)
+
\sum_{k=1}^{K}\Pr(e_k=1).
\end{equation}

\begin{corollary}[Sub-linear growth with sub-linear false expansions]
\label{cor:sublinear_with_errors}
If
\[
\sum_{k=1}^{K}\Pr(\delta_k>\epsilon)=o(K)
\quad
\text{and}
\quad
\sum_{k=1}^{K}\Pr(e_k=1)=o(K),
\]
then $\mathbb{E}[M_K]=o(K)$.
\end{corollary}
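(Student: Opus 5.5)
The corollary follows from the relaxed stepwise inequality, unrolled and averaged, in the same way as the proof of Theorem~\ref{thm:sublinear_growth}. I will first make sure the stepwise bound $M_k \le M_{k-1} + \mathbb{I}_k + e_k$ holds pointwise under (P1) alone, without (P2). Fix a realization of the stream and of the decisions.

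\begin{itemize}
\item If $M_k = M_{k-1}$, or if $M_k < M_{k-1}$ because a merge consolidates experts, the bound is immediate since $\mathbb{I}_k, e_k \ge 0$.
\item If $M_k > M_{k-1}$, then by (P1) we have $M_k = M_{k-1}+1$. Either $\delta_k > \epsilon$, so $\mathbb{I}_k = 1$, or $\delta_k \le \epsilon$, so $e_k = 1$ by definition.
\end{itemize}

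In every case the increment is at most $\mathbb{I}_k + e_k$.

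Next I unroll from $M_0 = 0$ to get $M_K \le \sum_{k=1}^K \mathbb{I}_k + \sum_{k=1}^K e_k$. Taking expectations and using linearity, with $\mathbb{E}[\mathbb{I}_k] = \Pr(\delta_k>\epsilon)$ and $\mathbb{E}[e_k] = \Pr(e_k=1)$, gives Eq.~\eqref{eq:full_bound}. All quantities are nonnegative indicators, so no integrability issues arise.

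Finally, dividing by $K$ gives
\[
0 \le \frac{\mathbb{E}[M_K]}{K} \le \frac{1}{K}\sum_{k=1}^{K}\Pr(\delta_k>\epsilon) + \frac{1}{K}\sum_{k=1}^{K}\Pr(e_k=1).
\]
Both terms on the right tend to $0$ by the two hypotheses. The sum of two $o(K)$ sequences is $o(K)$, so $\mathbb{E}[M_K]=o(K)$.

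There is no real obstacle here. The only point needing care is the first step: it must not silently use (P2), and it must allow the expert count to stay flat or decrease under merges.
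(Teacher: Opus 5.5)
Your proposal is correct and is essentially the paper's argument: the paper states the relaxed stepwise bound $M_k \le M_{k-1}+\mathbb{I}_k+e_k$, unrolls it from $M_0=0$ and takes expectations to obtain Eq.~\eqref{eq:full_bound}, and then divides by $K$ and lets $K\to\infty$. Your case analysis, which shows the stepwise bound needs only (P1) and the definition of $e_k$ and allows the count to stay flat under merges, spells out a step the paper asserts without justification.
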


\begin{proof}
Divide Eq.~\eqref{eq:full_bound} by $K$ and let $K\to\infty$.
\end{proof}

\paragraph{From regret to false-expansion count.}
A no-regret bandit bound controls cumulative expected reward gap, not directly the number of wrong actions.
To translate regret into a count of false expansions, a margin condition is required.

Let $a_k$ be the action selected at step $k$, let $a_k^\star$ be a population-optimal action for the same context, and let
\[
\Delta_k(a)
=
\bar r_k(a_k^\star)-\bar r_k(a)
\ge0
\]
be the conditional expected reward gap of action $a$.
Define cumulative regret
\[
\mathrm{Reg}_K
=
\sum_{k=1}^{K}
\mathbb{E}\!\left[\Delta_k(a_k)\right].
\]
Assume a false-expansion margin:
\begin{equation}
\label{eq:false_expand_margin}
\exists\,\Delta_{\min}>0
\quad\text{such that}\quad
\delta_k\le\epsilon
\text{ and }
a_k=\textsc{Keep}
\implies
\Delta_k(a_k)\ge\Delta_{\min}.
\end{equation}
Then
\[
\Pr(e_k=1)
=
\mathbb{E}[e_k]
\le
\frac{1}{\Delta_{\min}}
\mathbb{E}\!\left[\Delta_k(a_k)\right],
\]
and therefore
\begin{equation}
\label{eq:false_expand_regret_bound}
\sum_{k=1}^{K}\Pr(e_k=1)
\le
\frac{\mathrm{Reg}_K}{\Delta_{\min}}.
\end{equation}
Thus, if $\mathrm{Reg}_K=o(K)$ and Eq.~\eqref{eq:false_expand_margin} holds, the false-expansion term is sub-linear.
Without the margin, a policy may take $O(K)$ suboptimal actions while still having sub-linear regret if the corresponding gaps vanish over time.

\paragraph{Applicability to CMAB.}
CMAB (Section~\ref{sec:cmab}) never computes $\delta_k$: it keeps a new expert when no candidate passes the similarity gate or when the best bandit score stays below the threshold.
(P2) therefore holds only approximately, and each keep on a non-novel chunk counts as a false expansion $e_k$.
CMAB updates its weights with a projected, EMA-smoothed advantage step, which is not a standard no-regret algorithm, and we do not prove a regret bound for it.
Eq.~\eqref{eq:false_expand_regret_bound} thus names the property CMAB would need for a sub-linear false-expansion term; proving it is left to future work.

\paragraph{Scope.}
The proof characterizes when expert growth is controlled by stream novelty and decision-error rate.
It does not claim sub-linear growth for streams that continually move into new regions of an unbounded representation space.
Empirically, COMET's retained expert count saturates at $5$--$7$ experts across X-TAIL orderings, consistent with the sufficient-condition analysis in regimes with revisits, bounded representation support, or decaying novelty.

\section{Algorithmic Details}
\label{app:algorithm}

Algorithm~\ref{alg:comet} summarizes the full training and inference procedure.
The pseudocode makes three implementation choices explicit.
First, the router is trained before expert optimization, so routing is tied to teacher-feature reconstruction rather than class semantics.
Second, expert training uses the task loss together with TAD and SACED, while the CMAB capacity controller is invoked only after the expert has been trained.
Third, the bandit is updated for both \textsc{merge} and \textsc{keep} outcomes whenever a candidate set is evaluated.

\begin{algorithm}[H]
\caption{\textsc{COMET}: Collaborative Mixture-of-Expert Transfer}
\label{alg:comet}
\footnotesize
\begin{algorithmic}[1]
\Require Frozen Teacher $\Phi=(\Phi_v,\Phi_t)$; stream $\{\mathcal{D}_k\}_{k=1}^{K}$; labeled validation splits $\{\mathcal{V}_k\}_{k=1}^{K}$; reference pool $\mathcal{D}_{\mathrm{ref}}$; CMAB state $\pi=(w,\tau,\beta,\bar r,N)$
\Statex Hyperparameters: $N_R,N_E$, $\lambda_1,\lambda_2$, $k_{\mathrm{ref}}$, $\theta_{\mathrm{sim}}$, $w_{\max}$, $\eta_w$, $\eta_\tau$
\Ensure Retained expert pool $\{E_m\}_{m=1}^{M}$ and router pool $\{R_m\}_{m=1}^{M}$

\State $M\gets0$
\For{$k=1$ to $K$}
    \State Initialize new expert $E_{\mathrm{new}}$ and router $R_{\mathrm{new}}$

    \Statex \hspace{\algorithmicindent}\textbf{Router training}
    \For{$n=1$ to $N_R$}
        \State Sample image batch $I\sim\mathcal{D}_k$
        \State $z\gets\textsc{stopgrad}(\Phi_v(I))$
        \State $\mathcal{L}_R\gets d^{-1}\|R_{\mathrm{new}}(z)-z\|_2^2$
        \State Update $R_{\mathrm{new}}$
    \EndFor

    \Statex \hspace{\algorithmicindent}\textbf{Reference-expert selection for SACED}
    \If{$M>0$}
        \State Sample $(I^r,T^r)\sim\mathcal{D}_{\mathrm{ref}}$
        \State $\mathcal{S}_k\gets\textsc{SelectRefs}(E_{\mathrm{new}},\{E_m\}_{m=1}^{M},\Phi,I^r,T^r,k_{\mathrm{ref}})$
    \Else
        \State $\mathcal{S}_k\gets\emptyset$
    \EndIf

    \Statex \hspace{\algorithmicindent}\textbf{Expert training}
    \For{$n=1$ to $N_E$}
        \State Sample task batch $(I,T)\sim\mathcal{D}_k$ and reference batch $(I^r,T^r)\sim\mathcal{D}_{\mathrm{ref}}$
        \State $\mathcal{L}_{\mathrm{task}}\gets\textsc{TaskLoss}(E_{\mathrm{new}},I,T)$
        \State $\mathcal{L}_{\mathrm{TAD}}\gets\textsc{TAD}(E_{\mathrm{new}},\Phi,I^r,T^r)$
        \If{$\mathcal{S}_k=\emptyset$}
            \State $\mathcal{L}_{\mathrm{SACED}}\gets 0$
        \Else
            \State $\mathcal{L}_{\mathrm{SACED}}\gets\textsc{SACED}(E_{\mathrm{new}},\mathcal{S}_k,\Phi,I^r)$
        \EndIf
        \State $\mathcal{L}\gets
        \mathcal{L}_{\mathrm{task}}
        +
        \mathcal{L}_{\mathrm{TAD}}
        +
        \lambda_2\mathcal{L}_{\mathrm{SACED}}$
        \State \textbf{Update} $E_{\mathrm{new}}$ using $\nabla \mathcal{L}$
    \EndFor
    \Statex \hspace{\algorithmicindent}\textbf{Merge-or-Keep via CMAB:}
    \If{$M=0$}
        \State $M\gets 1$;\ $E_1\gets E_{\mathrm{new}}$;\ $R_1\gets R_{\mathrm{new}}$
    \Else
        \State $\mathcal{C}_k\gets
        \textsc{Candidates}(E_{\mathrm{new}},\{E_m\}_{m=1}^{M},
        \mathcal{V}_k,\mathcal{D}_{\mathrm{ref}})$
        \If{$\mathcal{C}_k=\emptyset$}
            \State $M\gets M+1$;\ $E_M\gets E_{\mathrm{new}}$;\ $R_M\gets R_{\mathrm{new}}$
        \Else
            \For{$c\in\mathcal{C}_k$}
                \State $\phi_{k,c}\gets
                \textsc{Context}(E_{\mathrm{new}},E_c,\mathcal{V}_k,\mathcal{D}_{\mathrm{ref}})$
                \State $B_k(c)\gets w^\top\phi_{k,c}+\beta/(1+N)$
            \EndFor
            \State $c^\star\gets\arg\max_{c\in\mathcal{C}_k}B_k(c)$
            \If{$B_k(c^\star)>\tau$}
                \State $\widetilde{E}_{c^\star}\gets
                \textsc{Merge}(E_{c^\star},E_{\mathrm{new}})$
                \State $r\gets\textsc{MergeReward}(\widetilde{E}_{c^\star},\mathcal{V}_k)$
                \State $E_{c^\star}\gets\widetilde{E}_{c^\star}$
                \State $R_{c^\star}\gets
                \textsc{MergeRouter}(R_{c^\star},R_{\mathrm{new}})$
                \State $a\gets\textsc{Merge}$
            \Else
                \State $M\gets M+1$;\ $E_M\gets E_{\mathrm{new}}$;\ $R_M\gets R_{\mathrm{new}}$
                \State $r\gets\textsc{KeepReward}(E_{\mathrm{new}},\mathcal{V}_k)$
                \State $a\gets\textsc{Keep}$
            \EndIf
            \State $\pi\gets\textsc{UpdateBandit}(\pi,\phi_{k,c^\star},a,r)$
        \EndIf
    \EndIf
\EndFor

\Statex \hspace{\algorithmicindent}
\Statex \textbf{Inference (task-free):}
\State $z\gets \Phi_v(x)$
\For{$m=1$ to $M$}
    \State $r_m(x)\gets d^{-1}\|R_m(z)-z\|_2^2$
    \State $\ell_m(x)\gets \log p(\log r_m(x)\mid m)-\log p(\log r_m(x)\mid \mathrm{bg})$
\EndFor
\State $w_m(x)\gets \mathrm{softmax}(\ell_m(x)/\tau_g)_m$
\State $\mathcal{A}(x)\gets \textsc{TopK}(w(x),k)$ with $k\le2$
\State $\hat{y}\gets \arg\max_{c\in\mathcal{C}_N}
\sum_{m\in\mathcal{A}(x)}
w_m(x)\,
\cos\!\left(E_{m,v}(x),E_{m,t}(c)\right)$
\end{algorithmic}
\end{algorithm}

\section{Implementation Details}
\label{app:impl}

\paragraph{Architecture.}
The teacher is the frozen CLIP ViT-B/16~\citep{radford2021clip}, with approximately $87$M parameters in the image encoder and $63$M parameters in the text encoder.
Each expert is a TinyCLIP backbone~\citep{wu2023tinyclip}, with approximately $39$M parameters in the image encoder and $19$M parameters in the text encoder.
Each task also receives a lightweight autoencoder router with a two-layer encoder and decoder operating in the frozen teacher's image-feature space.
The router adds approximately $0.2$M parameters per retained expert.

\paragraph{Training schedule.}
For each task, COMET runs three stages.
First, the autoencoder router is trained for $1{,}000$ iterations using normalized reconstruction loss on frozen teacher image features.
Second, the new expert is trained for $1{,}000$ iterations with AdamW~\citep{loshchilov2019decoupled}, learning rate $10^{-5}$, and batch size $64$.
The training objective is the per-task loss of the main text,
\[
\mathcal{L}_k
=
\mathcal{L}_{\mathrm{task}}
+
\mathcal{L}_{\mathrm{TAD}}
+
\lambda_2\mathcal{L}_{\mathrm{SACED}},
\]
with $\lambda_1$ weighting the cross-modal term inside $\mathcal{L}_{\mathrm{TAD}}$.
Third, after expert training, the CMAB capacity controller decides whether the new expert is kept or merged.
Retained experts and routers receive no further gradient updates; CMAB changes an expert and its router only when it merges a later expert into them (Algorithm~\ref{alg:comet}).

\paragraph{Loss weights.}
Inside TAD, the cross-modal weight is $\lambda_1=0.2$.
The SACED weight is $\lambda_2=1.0$.
Sensitivity sweeps in Appendix~\ref{app:sensitivity} show that the method is stable over an order of magnitude around these defaults.

\paragraph{Reference and validation signals.}
The reference pool $\mathcal{D}_{\mathrm{ref}}$ contains $10{,}800$ image--text pairs from the Conceptual Captions validation set~\citep{sharma2018conceptual}.
It is used for label-free distillation, feature-centroid estimates, routing agreement, and distribution-level compatibility.
Because Conceptual Captions does not provide task classification labels, $\mathcal{D}_{\mathrm{ref}}$ is not used to compute classification accuracy.
All classification accuracy, cross-performance, and CMAB reward signals are computed on the labeled held-out split $\mathcal{V}_k$ of the current task.

\subsection{CMAB Capacity Controller Details}
\label{app:cmab_impl}

\paragraph{Candidate generation.}
After training $E_{\mathrm{new}}$, COMET compares it with each retained expert $E_c\in\mathcal{E}_{<k}$.
The candidate compatibility score is
\begin{equation}
\label{eq:app_cmab_candidate_similarity}
\mathrm{sim}(E_{\mathrm{new}},E_c)
=
\frac{
w_{\mathrm{cross}}\mathrm{CrossPerf}(E_{\mathrm{new}},E_c)
+
w_{\mathrm{cent}}\mathrm{CentSim}(E_{\mathrm{new}},E_c)
+
w_{\mathrm{gate}}\mathrm{GateCorr}(E_{\mathrm{new}},E_c)
}{
w_{\mathrm{cross}}+w_{\mathrm{cent}}+w_{\mathrm{gate}}
}.
\end{equation}
$\mathrm{CrossPerf}$ evaluates both experts with the current-task label prompts on $\mathcal{V}_k$ and averages their validation accuracies.
$\mathrm{CentSim}$ measures cosine similarity between image-feature centroids.
$\mathrm{GateCorr}$ measures correlation between router/gating responses over a shared unlabeled batch.
Experts satisfying $\mathrm{sim}(E_{\mathrm{new}},E_c)>\theta_{\mathrm{sim}}$ enter the candidate set $\mathcal{C}_k$.
If $\mathcal{C}_k=\emptyset$, the new expert is kept and no bandit update is applied, because no valid merge action was evaluated.

\paragraph{Context construction.}
For each candidate pair $(E_{\mathrm{new}},E_c)$, COMET constructs
\begin{equation}
\label{eq:app_cmab_context}
\phi_{k,c}
=
[
\mathrm{Acc}_k,\;
m_{k,c},\;
-d_{k,c},\;
\kappa_k,\;
-\ell_c
]^\top .
\end{equation}
Here $\mathrm{Acc}_k$ is the new expert's validation accuracy on $\mathcal{V}_k$.
The merge-compatibility feature is
\[
m_{k,c}
=
0.4s_{k,c}
+
0.3o_{k,c}
+
0.3q_{k,c},
\]
where $s_{k,c}$ is the candidate similarity score, $o_{k,c}$ is routing overlap, and $q_{k,c}$ is angle compatibility for the merge operator.
The term $d_{k,c}$ estimates degradation risk, $\kappa_k$ increases with the current pool size, and $\ell_c$ increases with the number of tasks already absorbed by candidate expert $E_c$.
The signs in Eq.~\eqref{eq:app_cmab_context} encode the desired decision direction: high accuracy, high compatibility, and high capacity pressure favor merging, while degradation risk and overloaded targets discourage it.

\paragraph{Bandit score and action.}
Each candidate receives
\begin{equation}
\label{eq:app_cmab_score}
B_k(c)
=
w^\top\phi_{k,c}
+
\frac{\beta_k}{1+N_k},
\end{equation}
where $w\in[0,w_{\max}]^5$ is the learned scoring vector, $\beta_k$ is a decaying exploration bonus, and $N_k$ is the number of previous updates.
Let $c^\star=\arg\max_{c\in\mathcal{C}_k}B_k(c)$.
The action is
\[
a_k
=
\begin{cases}
\textsc{Merge}(E_{\mathrm{new}},E_{c^\star}), & B_k(c^\star)>\tau_k,\\
\textsc{Keep}(E_{\mathrm{new}}), & B_k(c^\star)\le\tau_k .
\end{cases}
\]
Thus, the decision is a learned trade-off among compatibility, risk, capacity pressure, and expert load rather than a fixed similarity threshold.

\paragraph{Reward.}
For a merge action, the merged expert is constructed temporarily and evaluated on $\mathcal{V}_k$:
\begin{equation}
\label{eq:app_cmab_merge_reward}
r_k^{\mathrm{merge}}
=
\mathrm{Acc}_{\mathcal{V}_k}
\!\left(
\mathrm{Merge}(E_{\mathrm{new}},E_{c^\star})
\right)
-
\lambda_{\mathrm{merge}} .
\end{equation}
For a keep action,
\[
r_k^{\mathrm{keep}}
=
\mathrm{Acc}_{\mathcal{V}_k}(E_{\mathrm{new}}).
\]
The small merge cost $\lambda_{\mathrm{merge}}$ discourages merges that do not produce a meaningful validation benefit.
Both \textsc{merge} and \textsc{keep} actions update the policy whenever a candidate set is evaluated.

\paragraph{Advantage-centered update.}
Because validation accuracy is positive-valued, updating with raw reward and fixed-sign features can cause monotone weight drift.
COMET therefore uses a centered advantage.
Let $\bar r_k$ be an exponential moving average of past rewards:
\[
A_k=r_k-\bar r_k,
\qquad
\bar r_{k+1}=(1-\rho_r)\bar r_k+\rho_r r_k .
\]
Let $y_k=+1$ for \textsc{merge} and $y_k=-1$ for \textsc{keep}.
For the selected candidate context $\phi_{k,c^\star}$,
\begin{equation}
\label{eq:app_cmab_weight_update}
\tilde w
=
\Pi_{[0,w_{\max}]}
\left(
w+\eta_w A_k y_k\phi_{k,c^\star}
\right),
\qquad
w
\leftarrow
(1-\rho_w)w+\rho_w\tilde w .
\end{equation}
This update is action-aware.
Successful merges increase future merge scores for similar contexts, failed merges decrease them, successful keeps make merging less likely in similar contexts, and poor keeps make merging more likely.

\paragraph{Adaptive threshold.}
The merge threshold is updated with the same advantage signal:
\begin{equation}
\label{eq:app_cmab_threshold_update}
\tilde\tau
=
\Pi_{[\tau_{\min},\tau_{\max}]}
\left(
\tau_k-\eta_\tau A_k y_k
\right),
\qquad
\tau_{k+1}
=
(1-\rho_\tau)\tau_k+\rho_\tau\tilde\tau .
\end{equation}
Successful merges lower the threshold, failed merges raise it, successful keeps raise it, and suboptimal keeps lower it.
The exploration bonus decays as $\beta_{k+1}=\gamma\beta_k$.

\paragraph{CMAB hyperparameters.}
The similarity gate is $\theta_{\mathrm{sim}}=0.40$.
The merge threshold is initialized as $\tau_1=0.65$.
We use $\eta_w=0.05$, $\eta_\tau=0.05$, $\rho_r=\rho_w=\rho_\tau=0.10$, $w_{\max}=10$, $\gamma=0.95$, and $\lambda_{\mathrm{merge}}=0.01$.
These values are used for all reported experiments unless explicitly varied in Appendix~\ref{app:sensitivity}.

\subsection{Inference Routing Details}
\label{app:routing_impl}

\paragraph{Router training.}
For each retained expert $E_m$, COMET stores a corresponding autoencoder router $R_m$.
The router is trained on frozen CLIP image features from the task that produced the expert.
Given $z=\Phi_v(x)$, the training objective is
\[
\mathcal{L}_{R_m}
=
\mathbb{E}_{x\sim\mathcal{D}_m}
\left[
\frac{1}{d}
\|R_m(z)-z\|_2^2
\right].
\]
Normalizing by feature dimension keeps the reconstruction scale stable across model sizes.

\paragraph{Likelihood-ratio calibration.}
Raw reconstruction errors are not directly comparable across routers because different routers may have different error scales.
COMET therefore calibrates each router using foreground and background error distributions.
For router $R_m$, define
\[
r_m(x)
=
\frac{1}{d}
\|R_m(\Phi_v(x))-\Phi_v(x)\|_2^2,
\qquad
\tilde r_m(x)=\log r_m(x).
\]
The foreground distribution $p_m^{\mathrm{fg}}(\tilde r)$ is estimated from task data associated with expert $E_m$.
The background distribution $p_m^{\mathrm{bg}}(\tilde r)$ is estimated by passing $\mathcal{D}_{\mathrm{ref}}$ through the same router.
At test time, the routing score is the log-likelihood ratio
\begin{equation}
\label{eq:app_routing_llr}
\ell_m(x)
=
\log p_m^{\mathrm{fg}}(\tilde r_m(x))
-
\log p_m^{\mathrm{bg}}(\tilde r_m(x)).
\end{equation}
In practice, the distributions are fitted as one-dimensional Gaussians over log reconstruction error, although the method only requires calibrated density estimates.

\paragraph{Top-$k$ activation.}
The calibrated scores are converted into gate weights by
\[
w_m(x)
=
\mathrm{softmax}\!\left(\ell_m(x)/\tau_g\right)_m .
\]
COMET activates the top-$1$ expert when the margin between the largest and second-largest gate scores exceeds a calibrated threshold; otherwise it activates the top-$2$ experts.
The selected experts compute CLIP-style image--text similarity logits over the candidate label set, and their logits are combined by gate-weighted averaging.
This routing mechanism uses no task identity and no teacher semantic prediction at inference.

\paragraph{Role separation.}
The router, CMAB, and distillation objectives act at different stages.
TAD and SACED shape the expert during training.
CMAB decides the retained pool after training.
The reconstruction router selects the active expert at inference.
This separation prevents the teacher from becoming a semantic oracle at test time and keeps routing independent of class-label predictions.

\paragraph{Hardware and reporting.}
All main-text results and appendix measurements are obtained on a single NVIDIA A6000 GPU with $48$\,GB memory.
Latency is measured using CUDA event timing with batch size $64$, $50$ measured batches, and $5$ warmup batches.
FLOPs are analytical estimates per sample.

\section{Hyperparameter Sensitivity}
\label{app:sensitivity}

We sweep each of the four key hyperparameters one at a time on X-TAIL with all others held at their default values.

\paragraph{Cross-modal TAD weight $\lambda_1$.}
Across $\lambda_1 \in \{0.2, 0.5, 1.0, 2.0\}$, Average ranges over $74.2$--$76.1$ and Last over $86.6$--$88.1$, all within $2.7$\,pp of the default.

\paragraph{SACED weight $\lambda_2$.}
Across $\lambda_2 \in \{0.01, 0.5, 1.0, 2.0\}$, Average remains within $74.1$--$76.2$.
Even near-zero SACED ($\lambda_2 = 0.01$) preserves TAD's gain, so the two losses are largely separable in practice.

\paragraph{Candidate similarity gate $\theta_{\mathrm{sim}}$.}
Lowering $\theta_{\mathrm{sim}}$ admits more candidate merge actions into the CMAB decision stage.
At the most permissive setting, the pool collapses to $M=1$ with Average $58.6$ and Last $59.2$, a $-28.9$\,pp Last drop relative to the default.
The gate controls which experts are eligible for consideration, while CMAB learns the final merge-or-keep decision from context features and outcome feedback.

\paragraph{Initial CMAB decision threshold $\tau_1$.}
Across $\tau_1 \in \{0.20,0.65,1.00\}$, Average remains within $74.1$--$76.2$ and Last within $86.1$--$89.3$.
The conservative setting $\tau_1=0.80$ rejects most candidate merges and reduces Last to $79.8$ at $M=6$, marking the practical upper bound on conservative initialization.
During training, the threshold is further adapted online by the advantage-based update in Appendix~\ref{app:cmab_impl}.

\section{Task-Order Sensitivity}
\label{app:order}

We re-run X-TAIL under three additional task orderings: \emph{EuroSAT-last} (placing the most dissimilar domain at the end of the stream) and two random permutations (seeds $7$ and $99$).
\Cref{tab:order} summarizes Average, Last, the final number of retained experts $M$, and per-task forgetting.
Average is $72.8 \pm 2.2$ across the four orderings, and per-task forgetting is $2.2 \pm 0.1$\,pp across the three re-ordered runs.
Frozen experts structurally bound forgetting regardless of order; the contextual bandit adapts the final pool size ($M \in \{5, 6, 7, 8\}$) to the task-similarity structure of each ordering, which explains the moderate Last variance ($84.1 \pm 3.0$).
Single-model continual baselines typically swing by $10$--$15$\,pp under permutation, so COMET's $\pm 3.0$ Last variance is small.

\begin{table}[t]
\caption{Task-order sensitivity on X-TAIL. ``Forgetting'' is per-task degradation from peak accuracy, averaged over the ten tasks. Forgetting is not available for the default ordering, so its mean is over the three re-ordered runs.}
\label{tab:order}
\centering
\scriptsize
\setlength{\tabcolsep}{4pt}
\renewcommand{\arraystretch}{1.05}
\begin{tabular}{lcccc}
\toprule
Ordering & Avg & Last & $M$ & Forgetting \\
\midrule
Default          & 76.1 & 88.1 & 8 & --- \\
EuroSAT-last     & 72.0 & 83.3 & 5 & $2.3$ \\
Random seed $7$  & 71.9 & 81.0 & 5 & $2.2$ \\
Random seed $99$ & 71.3 & 83.8 & 5 & $2.1$ \\
\midrule
Mean $\pm$ std   & $72.8{\pm}2.2$ & $84.1{\pm}3.0$ & $5.8{\pm}1.5$ & $2.2{\pm}0.1$ \\
\bottomrule
\end{tabular}
\end{table}

\section{Reference-Data Sensitivity}
\label{app:refdata}

SACED's reference pool is a small slice of Conceptual Captions; we vary both its size and its source domain.
On X-TAIL, Average degrades smoothly with size: $1$K$\to 68.4$, $5$K$\to 68.1$, $10.8$K$\to 76.1$, indicating diminishing returns past a few thousand pairs.
Substituting ImageNet validation images for Conceptual Captions at matched size drops Average to $68.6$ on X-TAIL and to $48.3$ on the heterogeneous benchmark.
The second drop is the more informative comparison: on far-OOD domains, web-style image--text diversity matters more than scale, so any diverse image--text pool of a few thousand examples is sufficient and curated single-domain alternatives can hurt.

\section{TAD Sub-Component Decomposition}
\label{app:tad_sub}

We decompose TAD into its three constituent losses on the X-TAIL 10-dataset protocol with SACED disabled to isolate TAD's contribution.
\Cref{tab:tad_subcomp} reports leave-one-out and one-only variants.
Image affinity $\mathcal{L}_I$ is the strongest single component: trained alone it reaches Average $68.2$ and Last $78.2$, and its removal causes the largest leave-one-out drop ($60.4$ and $63.5$).
Cross-modal affinity $\mathcal{L}_{\mathrm{Cross}}$ is the weakest in isolation but contributes a complementary $+6.6$\,pp Average over the two intra-modal losses (no-$\mathcal{L}_{\mathrm{Cross}}$ $68.8$ vs.\ Full TAD $75.4$), so all three channels are required for the strongest result.
Every partial variant also falls below the no-distillation baseline of \Cref{tab:compact_results} (a) ($73.9$ Average, $85.6$ Last), while the full objective improves on it.
We conjecture that matching only a subset of the affinity relations pins part of the CLIP geometry and lets the unmatched relations absorb the task-driven drift, which distorts the joint image--text structure more than leaving all relations free.
Under this reading, the three channels act as one constraint, and the leave-one-out drops measure interactions among channels instead of each channel's additive share.

\begin{table}[t]
\caption{TAD sub-component ablation on X-TAIL. ``Full TAD'' uses all three losses; ``Full COMET'' adds SACED. SACED is disabled for the sub-component rows.}
\label{tab:tad_subcomp}
\centering
\scriptsize
\setlength{\tabcolsep}{4pt}
\renewcommand{\arraystretch}{1.05}
\begin{tabular}{lccccc}
\toprule
Variant & $\mathcal{L}_I$ & $\mathcal{L}_T$ & $\mathcal{L}_{\mathrm{Cross}}$ & Avg & Last \\
\midrule
Full COMET (TAD$+$SACED) & \checkmark & \checkmark & \checkmark & \textbf{76.1} & \textbf{88.1} \\
Full TAD                & \checkmark & \checkmark & \checkmark & 75.4 & 87.6 \\
\midrule
w/o $\mathcal{L}_I$            & $\times$  & \checkmark & \checkmark & 60.4 & 63.5 \\
w/o $\mathcal{L}_T$            & \checkmark & $\times$  & \checkmark & 64.5 & 76.6 \\
w/o $\mathcal{L}_{\mathrm{Cross}}$ & \checkmark & \checkmark & $\times$ & 68.8 & 80.7 \\
$\mathcal{L}_I$ only            & \checkmark & $\times$  & $\times$  & 68.2 & 78.2 \\
$\mathcal{L}_T$ only            & $\times$  & \checkmark & $\times$  & 64.8 & 75.1 \\
$\mathcal{L}_{\mathrm{Cross}}$ only & $\times$  & $\times$  & \checkmark & 59.9 & 74.5 \\
\bottomrule
\end{tabular}
\end{table}

\section{Per-Input Drift Analysis of Tri-Affinity Components}
\label{app:per_input_drift}

The aggregate ablation in Appendix~\ref{app:tad_sub} establishes that all three TAD channels are required for the strongest result; this section probes the same effect at the level of individual inputs to verify that the gain is broadly distributed rather than carried by a few outliers.
We compare two variants: \emph{expA}, which enables only the cross-modal affinity distillation loss $\mathcal{L}_{\mathrm{Cross}}$, and \emph{expB}, which enables the full Tri-Affinity objective $\mathcal{L}_I + \mathcal{L}_T + \mathcal{L}_{\mathrm{Cross}}$.

\paragraph{Metric.}
For a task dataset, we compute the representation of each image $x$ using the model before and after adapting on that task, denoted $h_{\mathrm{before}}(x)$ and $h_{\mathrm{after}}(x)$.
We measure per-image drift by cosine shift,
\begin{equation}
\Delta(x) \;=\; 1 - \cos\!\big(h_{\mathrm{before}}(x),\,h_{\mathrm{after}}(x)\big),
\end{equation}
and report the distribution of $\Delta(x)$ over task samples using the mean and the $90$th percentile ($p90$).
We compute $\Delta(x)$ on both (i) raw image features and (ii) their $\ell_2$-normalized embeddings.

\paragraph{Results.}
Figure~\ref{fig:image_shift} shows the two tasks with the largest relative reduction in mean shift (SUN397 and Caltech101).
In both cases, the expB (blue) histogram is shifted left of expA (red), indicating smaller per-image drift under the full Tri-Affinity objective.
On SUN397~\cite{xiao2010sun397} (Task~09), expB reduces the mean shift from $0.225$ to $0.118$ ($47.5\%$ relative reduction) and reduces $p90$ from $0.311$ to $0.164$.
On Caltech101~\cite{li2004caltech101} (Task~01), the mean shift decreases from $0.239$ to $0.145$ ($39.3\%$ relative reduction) and $p90$ decreases from $0.323$ to $0.224$.
Dashed vertical lines mark the mean and dotted lines mark $p90$; both moving left indicates that the improvement is not driven only by a few outliers but reflects reduced drift across a broad portion of samples.
The embedding and feature views appear similar because the metric is cosine-based and embeddings are normalized versions of features.

\begin{figure}[t]
  \centering
  \includegraphics[width=\columnwidth]{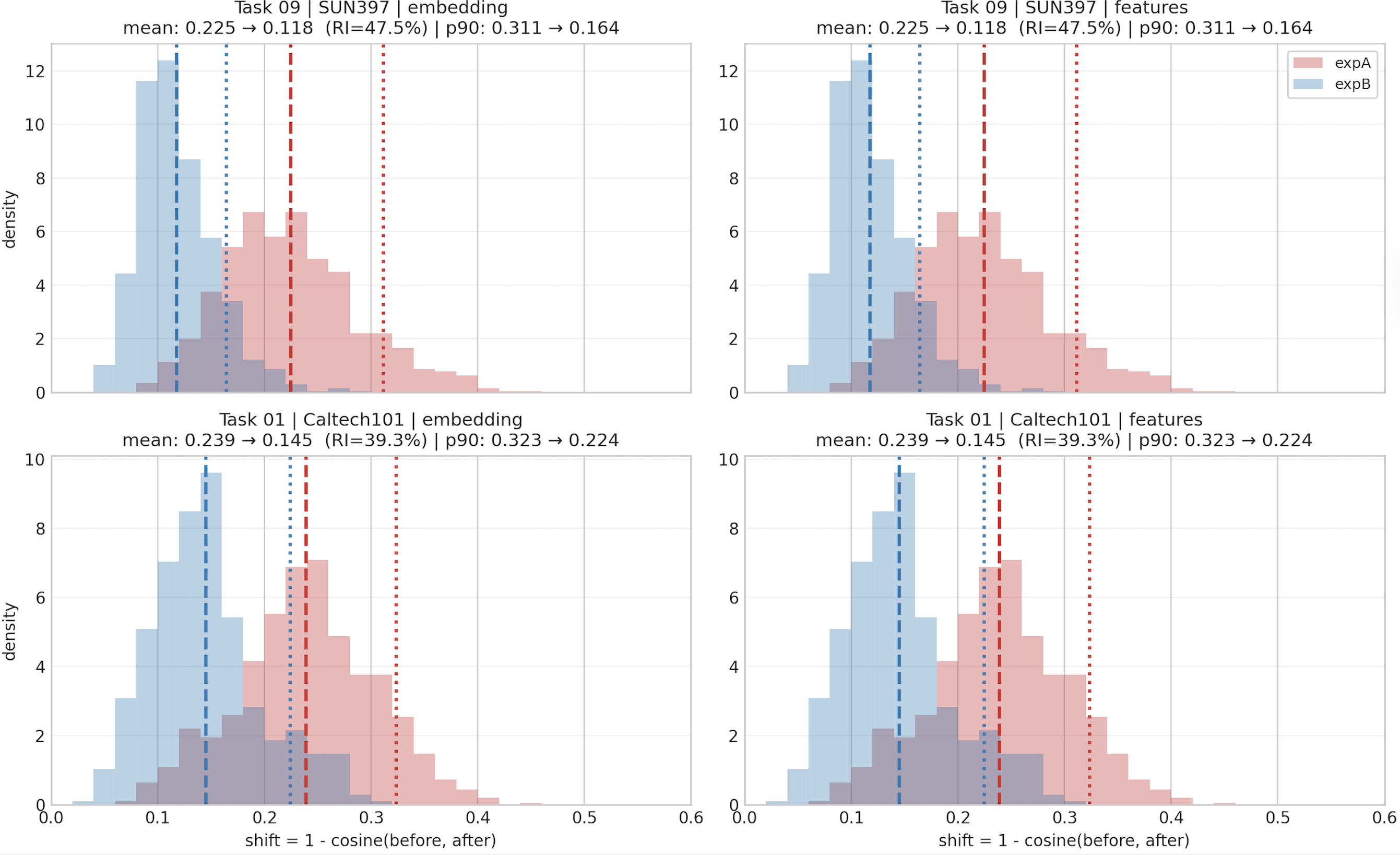}
  \caption{Per-input stability comparison between expA (cross-modal affinity only) and expB (image$+$text$+$cross-modal affinity). We plot the distribution of per-image cosine shift $1-\cos(\text{before},\text{after})$ on the two tasks (SUN397 and Caltech101), shown for both embeddings and features. Dashed vertical lines indicate the mean shift and dotted lines indicate the 90th-percentile shift ($p90$).}
  \label{fig:image_shift}
\end{figure}

\section{SACED Selection Criteria}
\label{app:saced_select}

SACED scores each prior expert on a three-signal combination: co-activation overlap (weight $0.5$), top-1 prediction agreement ($0.3$), and class-distribution similarity ($0.2$), and keeps the top $k_{\mathrm{ref}}$ candidates above the threshold.
\Cref{tab:saced_select} compares each criterion in isolation against using all prior experts and against random selection, with all rows trained under the full COMET pipeline so they are directly comparable to full COMET ($76.1$ Average, $88.1$ Last).
Co-activation alone matches full COMET on Last ($88.4$ vs.\ $88.1$) and beats random selection by $0.7$\,pp, which makes co-activation the dominant signal; the full three-signal criterion adds $1.3$\,pp Average ($76.1$ vs.\ $74.8$).
Distribution similarity alone underperforms because the signal becomes informative only after co-activation has narrowed the candidate set.
On Last, random and all-prior selection both underperform co-activation, so selectivity matters: not every prior expert contributes useful supervision.

\begin{table}[t]
\caption{SACED selection-criterion ablation on X-TAIL with reference budget $k_{\mathrm{ref}}{=}3$. The full criterion combines all three signals.}
\label{tab:saced_select}
\centering
\scriptsize
\setlength{\tabcolsep}{4pt}
\renewcommand{\arraystretch}{1.05}
\begin{tabular}{lcc}
\toprule
Selection rule & Avg & Last \\
\midrule
Full criterion (COMET)        & \textbf{76.1} & 88.1 \\
Co-activation only            & 74.8 & \textbf{88.4} \\
All prior experts             & 74.4 & 87.9 \\
Random selection              & 74.9 & 87.7 \\
Top-1 agreement only          & 74.2 & 87.0 \\
Distribution similarity only  & 67.8 & 77.2 \\
\bottomrule
\end{tabular}
\end{table}

\begin{table}[t]
\caption{Extended expert-merging ablation on X-TAIL. $\mathcal{A}/M$ = Average accuracy per stored expert. Fixed $\sigma$ replaces CMAB with a non-adaptive rule that merges whenever the candidate similarity exceeds $\sigma$.}
\label{tab:cmab_extended}
\centering
\scriptsize
\setlength{\tabcolsep}{3pt}
\renewcommand{\arraystretch}{1.05}
\begin{tabular}{lcccccc}
\toprule
Variant & $M$ & Params & Trans. & Avg & Last & $\mathcal{A}/M$ \\
\midrule
\textbf{COMET (CMAB)} & \textbf{8} & \textbf{551M} & \textbf{62.6} & \textbf{76.1} & \textbf{88.1} & \textbf{9.5} \\
Always-Expand    & 10 & 667M & 61.6 & 75.8 & 87.3 & 7.6 \\
Fixed $\sigma{=}0.9$ & 9 & 609M & 59.4 & 77.2 & 84.9 & 8.6 \\
Fixed $\sigma{=}0.7$ & 2 & 203M & 58.6 & 59.3 & 61.9 & 29.7 \\
Fixed $\sigma{=}0.5$ & 1 & 145M & 58.3 & 58.4 & 58.9 & 58.4 \\
Always-Merge     & 1 & 145M & 58.4 & 58.4 & 58.4 & 58.4 \\
Random ($p{=}0.5$) & 3 & 261M & 58.9 & 64.9 & 69.1 & 21.6 \\
\bottomrule
\end{tabular}
\end{table}

\section{Extended Expert-Merging Strategy Comparison}
\label{app:cmab_extended}

\Cref{tab:cmab_extended} reports the full seven-variant comparison underlying \Cref{tab:compact_results} (b) in the main text.
The conservative fixed threshold $\sigma = 0.9$ retains $9$ experts and edges CMAB by $1.1$\,pp Average ($77.2$ vs.\ $76.1$) but pays $-3.2$\,pp Last and $+10.5\%$ parameters; in continual learning, where Last reflects what the model can still do at the end of the stream, CMAB's trade-off favors the right axis.
Aggressive thresholds ($\sigma \in \{0.5, 0.7\}$) collapse the pool to one or two experts and lose $26$--$29$\,pp Last.
On the heterogeneous 4-task sequence, the best fixed threshold ($\sigma = 0.9$) retains all four experts but achieves only $41.9$ Average, whereas CMAB reaches $72.9$ Average and $96.7$ Last with $3$--$4$ experts.
No single fixed threshold is optimal across both sequences; the bandit adapts because its context vector reads accuracy, merge compatibility, degradation risk, capacity pressure, and expert load jointly rather than relying on similarity alone.

A merge-threshold sweep $\tau \in [0.50, 0.80]$ produces final pool sizes $M_K \in \{5, 6, 7, 9, 10\}$ that vary monotonically with $\tau$, confirming smooth control over the storage--accuracy trade-off.
EuroSAT consistently merges only at the most permissive setting; Flowers102 merges across $\tau \in [0.50, 0.65]$ at a modest cost of $-2.7$\,pp on its dedicated-expert score; per-task accuracies vary by less than $2$\,pp across $\tau$ on completed tasks.

\section{Per-Dataset Capacity Analysis}
\label{app:capacity}

\Cref{tab:capacity_gap} explains the per-dataset variation through the lens of teacher and student zero-shot accuracy.
On eight of ten X-TAIL tasks, COMET matches or exceeds the CLIP teacher's zero-shot ceiling.
Among the five datasets with positive teacher--student capacity gaps, COMET recovers $>100\%$ of the gap on four, exceeding the teacher despite a $2.6\times$ smaller student.
COMET sits below the teacher's zero-shot ceiling only on Food ($83.5$ vs.\ $84.4$) and Caltech101 ($92.9$ vs.\ $93.5$), by under one point in each case, margins within run-to-run noise.

\begin{table}[t]
\caption{Per-dataset capacity-gap analysis on X-TAIL. All accuracies use each dataset's own label space, unlike the generalized X-TAIL label space of \Cref{tab:main_results}. ZS = zero-shot. Recovery $=(\text{COMET}{-}\text{TinyCLIP ZS})/(\text{CLIP ZS}{-}\text{TinyCLIP ZS}){\times}100\%$ on positive-gap datasets; ``--'' = capacity gap is non-positive.}
\label{tab:capacity_gap}
\centering
\scriptsize
\setlength{\tabcolsep}{3pt}
\renewcommand{\arraystretch}{1.05}
\begin{tabular}{lccccc}
\toprule
Dataset & CLIP ZS & TinyCLIP ZS & Gap & COMET & Recovery \\
\midrule
Aircraft     & 22.6 & 16.9 & $+5.7$  & 83.1 & $>100\%$ \\
Caltech101   & 93.5 & 93.9 & $-0.4$  & 92.9 & -- \\
DTD          & 44.5 & 45.3 & $-0.8$  & 78.8 & -- \\
EuroSAT      & 36.1 & 50.7 & $-14.6$ & 98.5 & -- \\
Flowers102   & 65.7 & 67.9 & $-2.2$  & 95.6 & -- \\
Food         & 84.4 & 78.5 & $+5.9$  & 83.5 & $84.7\%$ \\
MNIST        & 62.6 & 38.6 & $+24.0$ & 98.8 & $251\%$ \\
OxfordPet    & 85.3 & 79.7 & $+5.6$  & 92.0 & $220\%$ \\
StanfordCars & 60.3 & 51.3 & $+9.0$  & 90.8 & $>100\%$ \\
SUN397       & 61.3 & 69.2 & $-7.9$  & 80.1 & -- \\
\bottomrule
\end{tabular}
\end{table}

\paragraph{Student backbone.}
Replacing the student with the smaller TinyCLIP-$8$M backbone causes Last to drop from $88.1$ to $61.1$ on X-TAIL, with fine-grained domains losing the most ($-72.4$\,pp on EuroSAT and $-55.2$\,pp on Flowers102), confirming that per-dataset variation tracks backbone capacity rather than method failure.
MNIST is essentially capacity-insensitive ($-1.0$\,pp), consistent with the observation that handwritten digit classification requires modest representational depth.

\paragraph{Teacher backbone.}
Replacing the CLIP ViT-B/16 teacher with the smaller CLIP ViT-B/32 (under $\tau = 0.80$) shifts Last from $86.2$ to $78.5$ ($-7.7$\,pp) and Transfer from $52.7$ to $61.5$ ($+8.8$\,pp), while $M$ decreases from $10$ to $9$.
A weaker teacher therefore lowers the upper bound on per-task accuracy but produces softer supervision that transfers more uniformly, indicating that the teacher's representational ceiling propagates to the experts.

\section{Routing Accuracy and Cross-Expert Coherence}
\label{app:routing}

Across the ten task-specific routers, the autoencoder router identifies the correct task for $95.9\%$ of test inputs at top-$1$ and $99.4\%$ at top-$2$; errors concentrate on borderline inputs where the gating margin is low ($0.621$ vs.\ $0.977$ when correctly routed).
\Cref{tab:safety_net} decomposes the predictions of the retained pool into a four-cell route$\times$predict outcome.
After merging, the router sends $2{,}069$ samples ($2.6\%$) away from their oracle expert, and the selected expert still classifies $1{,}086$ of them ($52.5\%$) correctly.
On a $7$-task subset where with- and without-SACED runs share oracle expert assignments, SACED raises misrouted-sample accuracy from $35.3\%$ to $46.1\%$ ($+10.8$\,pp) and lifts pairwise expert representation similarity (CKA) by $+0.009$, supporting the cross-expert coherence interpretation that SACED converts hard routing failures into graceful degradation.
A random-attention control confirms that the alignment SACED learns is statistically meaningful (Cohen's $d = 0.45$, $p < 0.001$) rather than noise.

\begin{table}[t]
\caption{Four-cell decomposition of test predictions on X-TAIL ($80{,}939$ samples). Route-OK = autoencoder selected the oracle expert; Predict-OK = expert produced the correct label.}
\label{tab:safety_net}
\centering
\scriptsize
\setlength{\tabcolsep}{6pt}
\renewcommand{\arraystretch}{1.05}
\begin{tabular}{lcc}
\toprule
Cell & Count & Share \\
\midrule
Route-OK $\wedge$ Predict-OK    & 66{,}664 & $82.4\%$ \\
Route-OK $\wedge$ Predict-Wrong & 12{,}206 & $15.1\%$ \\
Route-Wrong $\wedge$ Predict-OK & 1{,}086  & $1.3\%$ \\
Route-Wrong $\wedge$ Predict-Wrong & 983    & $1.2\%$ \\
\bottomrule
\end{tabular}
\end{table}

\section{Teacher Attention on Out-of-Distribution Domains}
\label{app:ood_attn}

We measure CLS-token attention concentration in the frozen teacher across three OOD domains (\Cref{tab:ood_attn}).
$C_{10}$ denotes the fraction of total CLS attention captured by the top decile of the $14 \times 14 = 196$ patches (uniform baseline $0.10$); $H$ is Shannon entropy over the patch grid in bits (uniform maximum $\log_2 196 \approx 7.61$).
Early-layer attention is diffuse on every OOD domain, but by the final layer attention concentrates $66$--$76\%$ of mass on the top decile, with entropy dropping to $5.1$--$5.7$ bits.
This is consistent with COMET's strong heterogeneous-benchmark performance: TAD does not require teacher classification correctness, only stable relative geometry, and SACED compares each expert's attention distance to the teacher with that of a prior expert, so the student never has to reproduce the teacher's attention map.

\begin{table}[t]
\caption{Frozen teacher attention statistics on three OOD domains, averaged over $N{=}50$ images per dataset. $C_{10}$: fraction of attention in the top decile of $196$ patches (uniform $0.10$). $H$: Shannon entropy in bits (uniform $\log_2 196 \approx 7.61$).}
\label{tab:ood_attn}
\centering
\scriptsize
\setlength{\tabcolsep}{4pt}
\renewcommand{\arraystretch}{1.05}
\begin{tabular}{llcccc}
\toprule
Dataset & Domain & $C_{10}$ (L0) & $H$ (L0) & $C_{10}$ (L11) & $H$ (L11) \\
\midrule
COVID-19 & Medical    & 0.338 & 7.11 & \textbf{0.663} & 5.71 \\
SEM      & Microscopy & 0.280 & 7.28 & \textbf{0.709} & 5.48 \\
NCT-CRC  & Histopath  & 0.228 & 7.39 & \textbf{0.760} & 5.08 \\
\bottomrule
\end{tabular}
\end{table}

\section{Held-out Zero-Shot Generalization}
\label{app:heldout}

After completing the full 10-task X-TAIL stream, the same model retains zero-shot accuracy on datasets it never saw during continual learning: $45.6$ on CIFAR-100, $71.0$ on CIFAR-10, and $60.6$ on CUB-200.
Because TAD anchors the student's affinity geometry to the frozen teacher rather than to any specific downstream label space, the student inherits a broadly transferable representation that survives the full continual-learning sequence.

\section{Detailed Computational Cost}
\label{app:cost_extended}

\Cref{tab:cost_train_full} reports per-task training cost in detail.
Among CLIP-based methods, COMET trains $2.6\times$ fewer parameters per task than full-CLIP fine-tuning baselines (LwF, WiSE-FT, ZSCL) and uses below-WiSE-FT VRAM because only the current expert receives gradients while the teacher and at most $k_{\mathrm{ref}} \le 2$ frozen reference experts are forward-only.
PEFT methods (MoE-Adapters, LADA) retain a smaller VRAM footprint because their trainable parameter count is one to two orders of magnitude smaller, but they still require the full frozen CLIP backbone in memory for forward passes.

\begin{table}[t]
\caption{Per-task training cost on a single NVIDIA A6000 ($48$\,GB), batch size $64$, $1{,}000$ iterations. ``Trainable'' counts only parameters that receive gradients.}
\label{tab:cost_train_full}
\centering
\scriptsize
\setlength{\tabcolsep}{4pt}
\renewcommand{\arraystretch}{1.05}
\begin{tabular}{lccc}
\toprule
Method & Trainable & Peak VRAM & Backbone in Memory \\
\midrule
LwF          & $\sim 150$M           & $\sim 4.1$\,GB & Full CLIP (trainable) \\
WiSE-FT      & $\sim 150$M           & $\sim 2.9$\,GB & Full CLIP (trainable) \\
ZSCL         & $\sim 150$M $(+150$M ref$)$ & $\sim 4.1$\,GB & Full CLIP $+$ frozen ref \\
MoE-Adapters & $2$--$5$M             & $\sim 1.1$\,GB & Frozen CLIP \\
LADA         & $1$--$3$M             & $\sim 0.9$\,GB & Frozen CLIP \\
\textbf{COMET (Ours)} & $\sim 58$M   & $\sim 2.7$\,GB & Frozen teacher \\
\bottomrule
\end{tabular}
\end{table}

\paragraph{Storage.}
With $M = 8$ retained experts after the X-TAIL stream, COMET stores roughly $8 \times 58\,{\rm M} + 87\,{\rm M} \approx 551$\,M parameters (eight experts and the teacher's vision encoder); the eight routers add about $1.6$\,M.
Because at most one expert is active per input at inference, this stored--active ratio of $3.8\times$ is conservative by mixture-of-experts standards (Switch-Base: $\sim 30\times$, Mixtral: $\sim 3.6\times$) and does not enter the deployment compute budget.

\section{16-Shot Results}
\label{app:main_16}

We re-run X-TAIL under a 16-shot per-class budget per task, holding all other hyperparameters at their defaults.
\Cref{tab:main_16} reports the per-task breakdown across all ten datasets and the three continual-learning metrics.
Across the ten tasks, COMET's Average is $72.6$ ($+9.0$\,pp over no-distillation) and Last is $83.3$ ($+10.0$\,pp over no-distillation), preserving the qualitative ordering of the full-shot table in the main text: TAD and SACED retain their additive contribution at low data budgets.
The same trend holds on the heterogeneous 4-task sequence, with COMET continuing to lead all baselines on Average and Last.

\begin{table}[H]
\caption{Comparison of different continual learning methods on X-TAIL~\citep{xu2024x-tail} 16-shot for each task in terms of \textit{Transfer}, \textit{Average}, and \textit{Last} scores (\%). The best results are highlighted with \textbf{bold} style.}
\label{tab:main_16}
\centering
\resizebox{\textwidth}{!}{%
\begin{tabular}{lccccccccccc}
\toprule
\textbf{Method} & \rotatebox{45}{Aircraft} & \rotatebox{45}{Caltech101} & \rotatebox{45}{DTD} & \rotatebox{45}{EuroSAT} & \rotatebox{45}{Flowers} & \rotatebox{45}{Food} & \rotatebox{45}{MNIST} & \rotatebox{45}{Pets} & \rotatebox{45}{Cars} & \rotatebox{45}{Sun397} & Average \\
\midrule
\multicolumn{12}{l}{\texttt{TinyCLIP ViT-39M/16@224px}} \\
Zero-shot & 17.0 & 75.6 & 39.0 & 40.6 & 65.0 & 77.4 & 22.0 & 78.9 & 47.8 & 67.2 & 53.1 \\
\multicolumn{12}{l}{\texttt{CLIP ViT-B/16@224px}} \\
Zero-shot & 23.8 & 74.3 & 36.4 & 37.4 & 64.1 & 83.4 & 43.9 & 87.8 & 65.5 & 60.8 & 57.7 \\
\midrule
\multicolumn{12}{l}{\textbf{Transfer}}\\
LwF & -- & 64.8 & 29.1 & 7.6 & 49.7 & 72.0 & 35.6 & 64.0 & 29.0 & 53.7 & 45.1 \\
WiSE-FT & -- & 68.3 & 30.8 & 17.3 & 59.2 & 82.5 & 41.4 & 74.9 & 47.8 & 59.6 & 53.5 \\
ZSCL & -- & 71.7 & 32.6 & 15.1 & 63.0 & 85.0 & 29.5 & 83.6 & 59.4 & 63.2 & 55.9 \\
MoE-Adapters & -- & 76.2 & 30.4 & 37.6 & 65.4 & \textbf{87.9} & \textbf{53.0} & \textbf{88.4} & 63.0 & 59.1 & \textbf{62.3} \\
LADA & -- & 74.9 & 35.9 & 36.5 & \textbf{66.1} & 84.2 & 37.9 & 88.0 & \textbf{65.3} & 60.1 & 60.9 \\
COMET (Ours) & -- & \textbf{85.6} & \textbf{45.5} & \textbf{44.2} & 65.1 & 79.0 & 35.9 & 77.7 & 47.3 & \textbf{67.6} & 60.9 \\
\midrule
\multicolumn{12}{l}{\textbf{Average}}\\
LwF & 18.2 & 80.9 & 42.3 & 34.0 & 48.9 & 80.7 & 59.6 & 71.1 & 35.8 & 56.3 & 52.8 \\
WiSE-FT & 27.6 & 80.1 & 43.3 & 38.5 & 63.3 & 86.5 & 62.3 & 79.3 & 54.6 & 61.5 & 59.7 \\
ZSCL & 40.1 & 78.2 & 49.9 & 40.0 & 73.5 & \textbf{87.7} & 56.2 & 86.2 & 64.6 & 64.8 & 64.1 \\
MoE-Adapters & 42.8 & 77.8 & 33.1 & 54.5 & 67.4 & 87.1 & 58.7 & 87.7 & 62.1 & 59.5 & 63.1 \\
Primal-RAIL & 44.9 & 89.5 & 55.6 & 68.9 & 83.5 & 85.3 & \textbf{64.9} & 88.1 & 69.1 & 64.9 & 71.5 \\
LADA & 48.8 & 92.0 & 60.8 & \textbf{71.3} & 84.5 & 84.1 & 60.3 & \textbf{88.9} & \textbf{69.2} & 61.6 & 72.1 \\
COMET (Ours) & \textbf{66.7} & \textbf{96.2} & \textbf{68.5} & 68.5 & \textbf{84.7} & 78.8 & 59.4 & 80.1 & 55.2 & \textbf{68.3} & \textbf{72.6} \\
\midrule
\multicolumn{12}{l}{\textbf{Last}}\\
LwF & 10.4 & 74.6 & 47.0 & 23.2 & 48.7 & 89.1 & 93.1 & 86.8 & 56.1 & \textbf{79.3} & 60.8 \\
WiSE-FT & 29.8 & 78.5 & 46.5 & 31.6 & 70.9 & 90.3 & 90.1 & 87.5 & 79.8 & 78.8 & 68.4 \\
ZSCL & 34.9 & 78.8 & 53.1 & 33.4 & 80.7 & \textbf{90.7} & 95.7 & 92.2 & 84.7 & 78.6 & 72.3 \\
MoE-Adapters & 42.8 & 78.0 & 33.8 & 61.8 & 68.8 & 86.4 & 67.2 & 86.1 & 58.5 & 63.2 & 64.7 \\
Primal-RAIL & 44.5 & 94.5 & 67.4 & 86.1 & 96.4 & 86.9 & 91.6 & \textbf{91.6} & 80.8 & 75.7 & 81.6 \\
LADA & 49.3 & 95.1 & 68.4 & \textbf{86.2} & 96.8 & 84.5 & 93.8 & 91.3 & 84.8 & 74.9 & 82.5 \\
COMET (Ours) & \textbf{66.5} & \textbf{97.4} & \textbf{73.9} & 76.9 & \textbf{97.8} & 78.5 & \textbf{94.8} & 85.5 & \textbf{86.8} & 74.8 & \textbf{83.3} \\
\bottomrule
\end{tabular}%
}
\end{table}


\clearpage
\section*{NeurIPS Paper Checklist}

\begin{enumerate}

\item {\bf Claims}
    \item[] Question: Do the main claims made in the abstract and introduction accurately reflect the paper's contributions and scope?
    \item[] Answer: \answerYes{}
    \item[] Justification: Yes; the abstract and Introduction state the three targeted failure modes and the COMET contributions, and the claims are supported by the method sections and experiments in Sections~\ref{sec:tri-affinity}--\ref{sec:routing} and Tables~\ref{tab:main_results}--\ref{tab:compact_results}.
    \item[] Guidelines:
    \begin{itemize}
        \item The answer \answerNA{} means that the abstract and introduction do not include the claims made in the paper.
        \item The abstract and/or introduction should clearly state the claims made, including the contributions made in the paper and important assumptions and limitations. A \answerNo{} or \answerNA{} answer to this question will not be perceived well by the reviewers. 
        \item The claims made should match theoretical and experimental results, and reflect how much the results can be expected to generalize to other settings. 
        \item It is fine to include aspirational goals as motivation as long as it is clear that these goals are not attained by the paper. 
    \end{itemize}

\item {\bf Limitations}
    \item[] Question: Does the paper discuss the limitations of the work performed by the authors?
    \item[] Answer: \answerYes{}
    \item[] Justification: Yes; limitations of the expert-growth argument are discussed in Appendix~\ref{app:growth_proof}, including the dependence on stream novelty and decision errors, and computational/storage trade-offs are discussed in the cost analysis and Appendix~\ref{app:cost_extended}.
    \item[] Guidelines:
    \begin{itemize}
        \item The answer \answerNA{} means that the paper has no limitation while the answer \answerNo{} means that the paper has limitations, but those are not discussed in the paper. 
        \item The authors are encouraged to create a separate ``Limitations'' section in their paper.
        \item The paper should point out any strong assumptions and how robust the results are to violations of these assumptions (e.g., independence assumptions, noiseless settings, model well-specification, asymptotic approximations only holding locally). The authors should reflect on how these assumptions might be violated in practice and what the implications would be.
        \item The authors should reflect on the scope of the claims made, e.g., if the approach was only tested on a few datasets or with a few runs. In general, empirical results often depend on implicit assumptions, which should be articulated.
        \item The authors should reflect on the factors that influence the performance of the approach. For example, a facial recognition algorithm may perform poorly when image resolution is low or images are taken in low lighting. Or a speech-to-text system might not be used reliably to provide closed captions for online lectures because it fails to handle technical jargon.
        \item The authors should discuss the computational efficiency of the proposed algorithms and how they scale with dataset size.
        \item If applicable, the authors should discuss possible limitations of their approach to address problems of privacy and fairness.
        \item While the authors might fear that complete honesty about limitations might be used by reviewers as grounds for rejection, a worse outcome might be that reviewers discover limitations that aren't acknowledged in the paper. The authors should use their best judgment and recognize that individual actions in favor of transparency play an important role in developing norms that preserve the integrity of the community. Reviewers will be specifically instructed to not penalize honesty concerning limitations.
    \end{itemize}

\item {\bf Theory assumptions and proofs}
    \item[] Question: For each theoretical result, does the paper provide the full set of assumptions and a complete (and correct) proof?
    \item[] Answer: \answerYes{}
    \item[] Justification: Yes; the sub-linear expert-growth result states its assumptions and proof in Appendix~\ref{app:growth_proof}, including Theorem~\ref{thm:sublinear_growth} and Corollary~\ref{cor:sublinear_with_errors}.
    \item[] Guidelines:
    \begin{itemize}
        \item The answer \answerNA{} means that the paper does not include theoretical results. 
        \item All the theorems, formulas, and proofs in the paper should be numbered and cross-referenced.
        \item All assumptions should be clearly stated or referenced in the statement of any theorems.
        \item The proofs can either appear in the main paper or the supplemental material, but if they appear in the supplemental material, the authors are encouraged to provide a short proof sketch to provide intuition. 
        \item Inversely, any informal proof provided in the core of the paper should be complemented by formal proofs provided in appendix or supplemental material.
        \item Theorems and Lemmas that the proof relies upon should be properly referenced. 
    \end{itemize}

    \item {\bf Experimental result reproducibility}
    \item[] Question: Does the paper fully disclose all the information needed to reproduce the main experimental results of the paper to the extent that it affects the main claims and/or conclusions of the paper (regardless of whether the code and data are provided or not)?
    \item[] Answer: \answerYes{}
    \item[] Justification: Yes; the experimental setup, datasets, metrics, model choices, training schedule, routing procedure, and hyperparameters are described in Section~\ref{sec:cmab}, Section~\ref{sec:routing}, the Experiments section, Algorithm~\ref{alg:comet}, and Appendix~\ref{app:impl}.
    \item[] Guidelines:
    \begin{itemize}
        \item The answer \answerNA{} means that the paper does not include experiments.
        \item If the paper includes experiments, a \answerNo{} answer to this question will not be perceived well by the reviewers: Making the paper reproducible is important, regardless of whether the code and data are provided or not.
        \item If the contribution is a dataset and\slash or model, the authors should describe the steps taken to make their results reproducible or verifiable. 
        \item Depending on the contribution, reproducibility can be accomplished in various ways. For example, if the contribution is a novel architecture, describing the architecture fully might suffice, or if the contribution is a specific model and empirical evaluation, it may be necessary to either make it possible for others to replicate the model with the same dataset, or provide access to the model. In general. releasing code and data is often one good way to accomplish this, but reproducibility can also be provided via detailed instructions for how to replicate the results, access to a hosted model (e.g., in the case of a large language model), releasing of a model checkpoint, or other means that are appropriate to the research performed.
        \item While NeurIPS does not require releasing code, the conference does require all submissions to provide some reasonable avenue for reproducibility, which may depend on the nature of the contribution. For example
        \begin{enumerate}
            \item If the contribution is primarily a new algorithm, the paper should make it clear how to reproduce that algorithm.
            \item If the contribution is primarily a new model architecture, the paper should describe the architecture clearly and fully.
            \item If the contribution is a new model (e.g., a large language model), then there should either be a way to access this model for reproducing the results or a way to reproduce the model (e.g., with an open-source dataset or instructions for how to construct the dataset).
            \item We recognize that reproducibility may be tricky in some cases, in which case authors are welcome to describe the particular way they provide for reproducibility. In the case of closed-source models, it may be that access to the model is limited in some way (e.g., to registered users), but it should be possible for other researchers to have some path to reproducing or verifying the results.
        \end{enumerate}
    \end{itemize}

\item {\bf Open access to data and code}
    \item[] Question: Does the paper provide open access to the data and code, with sufficient instructions to faithfully reproduce the main experimental results, as described in supplemental material?
    \item[] Answer: \answerYes{}
    \item[] Justification: We plan to release the code, reproduction scripts, configuration files, and setup instructions in an open-source GitHub repository by the camera-ready version. Reproducibility is also supported through the public datasets, full algorithmic description, implementation details, hyperparameters, and compute information in Algorithm~\ref{alg:comet} and Appendix~\ref{app:impl}.
    \item[] Guidelines:
    \begin{itemize}
        \item The answer \answerNA{} means that paper does not include experiments requiring code.
        \item Please see the NeurIPS code and data submission guidelines (\url{https://neurips.cc/public/guides/CodeSubmissionPolicy}) for more details.
        \item While we encourage the release of code and data, we understand that this might not be possible, so \answerNo{} is an acceptable answer. Papers cannot be rejected simply for not including code, unless this is central to the contribution (e.g., for a new open-source benchmark).
        \item The instructions should contain the exact command and environment needed to run to reproduce the results. See the NeurIPS code and data submission guidelines (\url{https://neurips.cc/public/guides/CodeSubmissionPolicy}) for more details.
        \item The authors should provide instructions on data access and preparation, including how to access the raw data, preprocessed data, intermediate data, and generated data, etc.
        \item The authors should provide scripts to reproduce all experimental results for the new proposed method and baselines. If only a subset of experiments are reproducible, they should state which ones are omitted from the script and why.
        \item At submission time, to preserve anonymity, the authors should release anonymized versions (if applicable).
        \item Providing as much information as possible in supplemental material (appended to the paper) is recommended, but including URLs to data and code is permitted.
    \end{itemize}

\item {\bf Experimental setting/details}
    \item[] Question: Does the paper specify all the training and test details (e.g., data splits, hyperparameters, how they were chosen, type of optimizer) necessary to understand the results?
    \item[] Answer: \answerYes{}
    \item[] Justification: Yes; the benchmarks, data sequences, evaluation metrics, model backbones, optimizer, learning rate, batch size, reference pool, loss weights, CMAB settings, and routing rule are specified in the Experiments section and Appendix~\ref{app:impl}.
    \item[] Guidelines:
    \begin{itemize}
        \item The answer \answerNA{} means that the paper does not include experiments.
        \item The experimental setting should be presented in the core of the paper to a level of detail that is necessary to appreciate the results and make sense of them.
        \item The full details can be provided either with the code, in appendix, or as supplemental material.
    \end{itemize}

\item {\bf Experiment statistical significance}
    \item[] Question: Does the paper report error bars suitably and correctly defined or other appropriate information about the statistical significance of the experiments?
    \item[] Answer: \answerNo{}
    \item[] Justification: No; the current manuscript reports task-order robustness and sensitivity analyses in Appendices~\ref{app:order}--\ref{app:sensitivity}, but does not provide seed-level error bars or formal significance tests for all main experimental tables.
    \item[] Guidelines:
    \begin{itemize}
        \item The answer \answerNA{} means that the paper does not include experiments.
        \item The authors should answer \answerYes{} if the results are accompanied by error bars, confidence intervals, or statistical significance tests, at least for the experiments that support the main claims of the paper.
        \item The factors of variability that the error bars are capturing should be clearly stated (for example, train/test split, initialization, random drawing of some parameter, or overall run with given experimental conditions).
        \item The method for calculating the error bars should be explained (closed form formula, call to a library function, bootstrap, etc.)
        \item The assumptions made should be given (e.g., Normally distributed errors).
        \item It should be clear whether the error bar is the standard deviation or the standard error of the mean.
        \item It is OK to report 1-sigma error bars, but one should state it. The authors should preferably report a 2-sigma error bar than state that they have a 96\% CI, if the hypothesis of Normality of errors is not verified.
        \item For asymmetric distributions, the authors should be careful not to show in tables or figures symmetric error bars that would yield results that are out of range (e.g., negative error rates).
        \item If error bars are reported in tables or plots, the authors should explain in the text how they were calculated and reference the corresponding figures or tables in the text.
    \end{itemize}

\item {\bf Experiments compute resources}
    \item[] Question: For each experiment, does the paper provide sufficient information on the computer resources (type of compute workers, memory, time of execution) needed to reproduce the experiments?
    \item[] Answer: \answerYes{}
    \item[] Justification: Yes; Section~\emph{Analysis of Computational Cost} and Appendix~\ref{app:cost_extended} report GPU type, VRAM, FLOPs, latency, throughput, per-task training time, and total training cost.
    \item[] Guidelines:
    \begin{itemize}
        \item The answer \answerNA{} means that the paper does not include experiments.
        \item The paper should indicate the type of compute workers CPU or GPU, internal cluster, or cloud provider, including relevant memory and storage.
        \item The paper should provide the amount of compute required for each of the individual experimental runs as well as estimate the total compute. 
        \item The paper should disclose whether the full research project required more compute than the experiments reported in the paper (e.g., preliminary or failed experiments that didn't make it into the paper). 
    \end{itemize}
    
\item {\bf Code of ethics}
    \item[] Question: Does the research conducted in the paper conform, in every respect, with the NeurIPS Code of Ethics \url{https://neurips.cc/public/EthicsGuidelines}?
    \item[] Answer: \answerYes{}
    \item[] Justification: Yes; the work uses public datasets and existing pretrained vision-language models, involves no human-subject experiments, and is evaluated as a continual-learning method without collecting private or sensitive data.
    \item[] Guidelines:
    \begin{itemize}
        \item The answer \answerNA{} means that the authors have not reviewed the NeurIPS Code of Ethics.
        \item If the authors answer \answerNo, they should explain the special circumstances that require a deviation from the Code of Ethics.
        \item The authors should make sure to preserve anonymity (e.g., if there is a special consideration due to laws or regulations in their jurisdiction).
    \end{itemize}

\item {\bf Broader impacts}
    \item[] Question: Does the paper discuss both potential positive societal impacts and negative societal impacts of the work performed?
    \item[] Answer: \answerNo{}
    \item[] Justification: No; the current manuscript focuses on the method and empirical evaluation and does not yet include a dedicated discussion of positive and negative societal impacts.
    \item[] Guidelines:
    \begin{itemize}
        \item The answer \answerNA{} means that there is no societal impact of the work performed.
        \item If the authors answer \answerNA{} or \answerNo, they should explain why their work has no societal impact or why the paper does not address societal impact.
        \item Examples of negative societal impacts include potential malicious or unintended uses (e.g., disinformation, generating fake profiles, surveillance), fairness considerations (e.g., deployment of technologies that could make decisions that unfairly impact specific groups), privacy considerations, and security considerations.
        \item The conference expects that many papers will be foundational research and not tied to particular applications, let alone deployments. However, if there is a direct path to any negative applications, the authors should point it out. For example, it is legitimate to point out that an improvement in the quality of generative models could be used to generate Deepfakes for disinformation. On the other hand, it is not needed to point out that a generic algorithm for optimizing neural networks could enable people to train models that generate Deepfakes faster.
        \item The authors should consider possible harms that could arise when the technology is being used as intended and functioning correctly, harms that could arise when the technology is being used as intended but gives incorrect results, and harms following from (intentional or unintentional) misuse of the technology.
        \item If there are negative societal impacts, the authors could also discuss possible mitigation strategies (e.g., gated release of models, providing defenses in addition to attacks, mechanisms for monitoring misuse, mechanisms to monitor how a system learns from feedback over time, improving the efficiency and accessibility of ML).
    \end{itemize}
    
\item {\bf Safeguards}
    \item[] Question: Does the paper describe safeguards that have been put in place for responsible release of data or models that have a high risk for misuse (e.g., pre-trained language models, image generators, or scraped datasets)?
    \item[] Answer: \answerNA{}
    \item[] Justification: Not applicable; the paper does not introduce or release high-risk assets such as generative models, pretrained language models, scraped datasets, or systems intended for direct deployment.
    \item[] Guidelines:
    \begin{itemize}
        \item The answer \answerNA{} means that the paper poses no such risks.
        \item Released models that have a high risk for misuse or dual-use should be released with necessary safeguards to allow for controlled use of the model, for example by requiring that users adhere to usage guidelines or restrictions to access the model or implementing safety filters. 
        \item Datasets that have been scraped from the Internet could pose safety risks. The authors should describe how they avoided releasing unsafe images.
        \item We recognize that providing effective safeguards is challenging, and many papers do not require this, but we encourage authors to take this into account and make a best faith effort.
    \end{itemize}

\item {\bf Licenses for existing assets}
    \item[] Question: Are the creators or original owners of assets (e.g., code, data, models), used in the paper, properly credited and are the license and terms of use explicitly mentioned and properly respected?
    \item[] Answer: \answerNo{}
    \item[] Justification: No; existing datasets and pretrained models are credited through citations in the Related Work and Experiments sections, but the current manuscript does not yet include a complete license table. The planned open-source GitHub repository for the camera-ready version will include the licenses, terms of use, and attribution information for all external assets used in the experiments.
    \item[] Guidelines:
    \begin{itemize}
        \item The answer \answerNA{} means that the paper does not use existing assets.
        \item The authors should cite the original paper that produced the code package or dataset.
        \item The authors should state which version of the asset is used and, if possible, include a URL.
        \item The name of the license (e.g., CC-BY 4.0) should be included for each asset.
        \item For scraped data from a particular source (e.g., website), the copyright and terms of service of that source should be provided.
        \item If assets are released, the license, copyright information, and terms of use in the package should be provided. For popular datasets, \url{paperswithcode.com/datasets} has curated licenses for some datasets. Their licensing guide can help determine the license of a dataset.
        \item For existing datasets that are re-packaged, both the original license and the license of the derived asset (if it has changed) should be provided.
        \item If this information is not available online, the authors are encouraged to reach out to the asset's creators.
    \end{itemize}

\item {\bf New assets}
    \item[] Question: Are new assets introduced in the paper well documented and is the documentation provided alongside the assets?
    \item[] Answer: \answerNA{}
    \item[] Justification: Not applicable; the paper proposes a method and does not introduce a new dataset, benchmark, model release, or other new asset in the current submission.
    \item[] Guidelines:
    \begin{itemize}
        \item The answer \answerNA{} means that the paper does not release new assets.
        \item Researchers should communicate the details of the dataset\slash code\slash model as part of their submissions via structured templates. This includes details about training, license, limitations, etc. 
        \item The paper should discuss whether and how consent was obtained from people whose asset is used.
        \item At submission time, remember to anonymize your assets (if applicable). You can either create an anonymized URL or include an anonymized zip file.
    \end{itemize}

\item {\bf Crowdsourcing and research with human subjects}
    \item[] Question: For crowdsourcing experiments and research with human subjects, does the paper include the full text of instructions given to participants and screenshots, if applicable, as well as details about compensation (if any)? 
    \item[] Answer: \answerNA{}
    \item[] Justification: Not applicable; the paper does not involve crowdsourcing, user studies, or research with human subjects.
    \item[] Guidelines:
    \begin{itemize}
        \item The answer \answerNA{} means that the paper does not involve crowdsourcing nor research with human subjects.
        \item Including this information in the supplemental material is fine, but if the main contribution of the paper involves human subjects, then as much detail as possible should be included in the main paper. 
        \item According to the NeurIPS Code of Ethics, workers involved in data collection, curation, or other labor should be paid at least the minimum wage in the country of the data collector. 
    \end{itemize}

\item {\bf Institutional review board (IRB) approvals or equivalent for research with human subjects}
    \item[] Question: Does the paper describe potential risks incurred by study participants, whether such risks were disclosed to the subjects, and whether Institutional Review Board (IRB) approvals (or an equivalent approval/review based on the requirements of your country or institution) were obtained?
    \item[] Answer: \answerNA{}
    \item[] Justification: Not applicable; the paper does not involve crowdsourcing or human-subject research, so IRB or equivalent review is not required.
    \item[] Guidelines:
    \begin{itemize}
        \item The answer \answerNA{} means that the paper does not involve crowdsourcing nor research with human subjects.
        \item Depending on the country in which research is conducted, IRB approval (or equivalent) may be required for any human subjects research. If you obtained IRB approval, you should clearly state this in the paper. 
        \item We recognize that the procedures for this may vary significantly between institutions and locations, and we expect authors to adhere to the NeurIPS Code of Ethics and the guidelines for their institution. 
        \item For initial submissions, do not include any information that would break anonymity (if applicable), such as the institution conducting the review.
    \end{itemize}

\item {\bf Declaration of LLM usage}
    \item[] Question: Does the paper describe the usage of LLMs if it is an important, original, or non-standard component of the core methods in this research? Note that if the LLM is used only for writing, editing, or formatting purposes and does \emph{not} impact the core methodology, scientific rigor, or originality of the research, declaration is not required.
    \item[] Answer: \answerNA{}
    \item[] Justification: Not applicable; LLMs are not used as an important, original, or non-standard component of the proposed method, experiments, or scientific claims.
    \item[] Guidelines:
    \begin{itemize}
        \item The answer \answerNA{} means that the core method development in this research does not involve LLMs as any important, original, or non-standard components.
        \item Please refer to our LLM policy in the NeurIPS handbook for what should or should not be described.
    \end{itemize}

\end{enumerate}

\end{document}